\def\+{\!+\!}
\newcommand\il[1]{\langle #1 \rangle}
\def\CW{\mathrm{cw}}
\def\Z{\mathbb{Z}}
\def\R{\mathbb{R}}
\def\erf{\mathrm{erf}}
\def\D{\mathcal{D}}
\def\E{\mathcal{E}}
\def\Z{\mathcal{Z}}
\def\X{\mathbf{X}}
\def\Y{\mathbf{Y}}
\newcommand*\dif{\mathop{}\!{d}}
\def\1F1{\mbox{$_{1}{F}_{\!1}$}}
\def\reg{\mathrm{sm}}
\newcommand{\eref}[1]{(\ref{#1})}
\begin{document}
\title{Cramer-Wold AutoEncoder}

\author{\name Szymon Knop \email szymonknop@gmail.com\\
       \name Jacek Tabor \email jacek.tabor@uj.edu.pl \\
       \name Przemysław Spurek \email przemyslaw.spurek@uj.edu.pl \\ 
       \name Igor Podolak \email igor.podolak@uj.edu.pl \\ 
       \name Marcin Mazur \email marcin.mazur@uj.edu.pl \\
       \addr Faculty of Mathematics and Computer Science \\ Jagiellonian University, Krak\'ow, Poland 
       \AND       
       \name Stanisław Jastrzębski \email staszek.jastrzebski@gmail.com \\
       \addr Department of Radiology\\
New York University School of Medicine,
        New York, United States
       }

\editor{Kevin Murphy and Bernhard Sch{\"o}lkopf}

\maketitle

\begin{abstract}
Computing the distance between the true and the sample distributions is a key component of most state-of-the-art generative models.  Inspired by prior work on Sliced-Wasserstein Autoencoders (SWAE) and WAE-MMD we construct a new generative model -- a Cramer-Wold AutoEncoder (CWAE). A fundamental component of CWAE is the characteristic kernel we construct, which we call Cramer-Wold kernel. Its main distinguishing feature is that it has a closed-form of the kernel product of radial gaussians.
Consequently, CWAE model has a closed-form for the distance between the posterior and the Normal prior, which simplifies the optimization procedure by removing the need to sample to compute the loss function. At the same time, CWAE performance often improves upon WAE-MMD and SWAE on standard benchmarks.
\end{abstract}

\begin{keywords}
Autoencoder, Generative model, Wasserstein Autoencoder, Cramer-Wold Theorem, Deep neural network
\end{keywords}

\section{Introduction}


One of the crucial aspects in construction of generative models is devising efficient methods for computing and minimizing distance between the true and the model prior distribution. Originally in Variational Autencoder (VAE)~\citep{kingma2014autoencodingvariationalbayes} this computation was carried out using the variational scheme. A significant improvement was brought by the introduction of Wasserstein metric~\citep{tolstikhin2017wasserstein} in the construction of WAE-GAN and WAE-MMD models, which relaxed the need for variational methods.  WAE-GAN requires a separate optimization problem to be solved to approximate the used divergence measure, while in WAE-MMD the discriminator has the closed form obtained from a characteristic kernel\footnote{Characteristic kernel is a kernel that is injective on distributions, see e.g.~\citet{muandet2017kernel}.}.

\begin{figure}
\centering
\includegraphics[height=6cm]{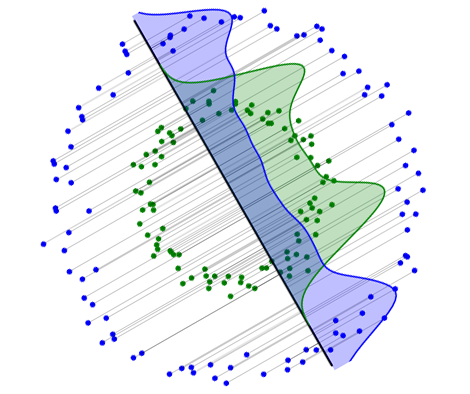} 
\caption{Cramer-Wold distance between two sets is obtained as the mean squared $L_2$-distance of their smoothed projections on all one-dimensional lines. Figure shows an exemplary (one of many) projection.}
\label{fig:slice}
\end{figure}

Most recently~\citet{kolouri2018sliced} introduced the Sliced-Wasserstein Autoencoder (SWAE), which simplifies this distance computation even further. The main innovation of SWAE was the introduction of the sliced-Wasserstein distance, a fast to estimate metric for comparing two distributions based on the mean Wasserstein distance of one-dimensional projections. However, even in SWAE there is no closed-form analytic formula that would enable computing the distance of the sample from the standard normal distribution. Consequently, in SWAE two types of sampling are needed: (i) sampling from the prior distribution and (ii) sampling over one-dimensional projections. 


Our main contribution is the introduction of Cramer-Wold kernel between distributions and based on it Cramer-Wold AutoEncoder model (CWAE). Crucially, the Cramer-Wold kernel is a characteristic kernel which has a closed-form for the product of radial Gaussians, see Eq.~\eref{eq:formula}. We use it to construct CWAE model in which the cost function has a closed analytic formula. Thus CWAE can be interpreted as a model retaining the best qualities of both SWAE and WAE-MMD: we use a characteristic kernel to discriminate distributions as in WAE-MMD, but its formula comes from using one-dimensional projections as in SWAE. We demonstrate on standard benchmarks that CWAE is faster to optimize and retains or even improves performance compared to both WAE-MMD and SWAE. 

In sections~\ref{sec:cw_distance} and~\ref{sec:compute-phi}, we introduce and theoretically analyse the Cramer-Wold distance, and follow with the formal definition of a Cramer-Wold kernel in Section~\ref{sec:cw-kernel}. Readers interested mainly in the construction of CWAE can proceed directly to Section~\ref{se:CWAE}. Section~\ref{se:ex} contains experiments. Finally, we conclude in Section~\ref{se:con}.


\section{Related work}
Generative models, that were to be easily trained, scalable, with real generative powers, is a long-time challenge of Machine Learning. First generative models, the Boltzmann Machine, Deep Belief Networks and Deep Boltzmann Machines were trained with a Monte Carlo approach~\citep{hinton1995wake,HintonOsinderoTeh2006FastLearning,salakhutdinovhinton2009deep}. The Monte Carlo MCMC training, slow and imprecise, gave way to the variational method, able to learn using a direct gradient, together with the advent of generative auto-encoder models. First came the variational autoencoder VAE~\citep{kingma2014autoencodingvariationalbayes} to use a stochastic encoder and a variational evidence lower bound ELBO. Variational inference is faster and scales much better with large data than MCMC.

A new paradigm of adversarial training was introduced with the advent of new paradigm of adversarial training and the
Generative Adversarial Networks GAN~\citep{goodfellow2014generative}. A discriminator network trained to distinguish between true and generated examples, pushes a generator to produce even better mappings.

\cite{makhzani2015adversarial} introduced the adversarial autoencoder. The adversarial approach was used not to check the generated data distribution compatibility with data distribution $p_{DATA}$, but to control the latent space distribution.

The Wasserstein autoencoders changed the method to check the compliance of the latent distribution with the prior one selected~\citep{tolstikhin2017wasserstein,arjovsky2017towards,arjovsky2017wassersteingan}. This gave way to two solutions: a model with adversarial training of prior compliance, an adversarial one, in case of square cost used coinciding with the adversarial autoencoder, and a maximum mean discrepancy MMD approach~\citep{Gretton2012kerneltwosampletest}.

An important ingredient of WAE-MMD model is given by the kernel used. For the review of kernels we refer the reader to \cite{muandet2017kernel}. Typically, one uses characteristic kernels \citep{sriperumbudur2011universality}, i.e. kernels which define a metric on the space of distributions. Moreover, the preference is given to kernels which do not decrease to zero exponentially, see \cite{tolstikhin2017wasserstein}, as it can cause problems in the minimization procedure due to small gradient.

\section{Cramer-Wold distance}\label{sec:cw_distance}

Motivated by the prevalent use of normal distribution as prior in modern generative models, we investigate whether it is possible to simplify and speed up the optimization of such models. As the first step towards this, we introduce Cramer-Wold distance, which has a simple analytical formula for computing normality of high-dimensional samples. On a high level our approach uses the traditional $L_2$ distance of kernel-based density estimation, computed across multiple single-dimensional projections of the true data and the output distribution of the model. We base our construction on the following two popular tricks of the trade: sliced based decomposition and smoothing of distributions.

\paragraph{Sliced-based decomposition of a distribution} Following the footsteps of~\citet{kolouri2018sliced, deshpande2018generative}, the first idea is to leverage the Cramer-Wold Theorem~\citep{cramer1936some} and Radon Transform~\citep{deans1983radon} to reduce computing distance between two distributions to one-dimensional calculations. For $v$ in the unit sphere $S_D \subset \R^D$, the projection of the set $X \subset \R^D$ onto the space spanned by $v$ is given by $v^TX$ and the projection of $N(m,\alpha I)$ is $N(v^Tm,\alpha)$. Cramer-Wold theorem states that two multivariate distributions can be uniquely identified by their all one-dimensional projections. For example, to obtain the key component of SWAE model, i.e.~the sliced-Wasserstein distance between two samples $X,Y \in \R^D$, we compute the mean Wasserstein distance between all one-dimensional projections:
\begin{equation} \label{eq:swd}
d_W(X,Y)=\int_{S_D} d_W(v^TX,v^TY) \, d\sigma_D(v),
\end{equation}
where $S_D$ denotes the unit sphere in $\R^D$ and $\sigma_D$ is the normalized surface measure on $S_D$. This approach is effective since the one-dimensional Wasserstein distance between samples has the closed-form, and therefore to estimate \eref{eq:swd} one has to sample only over the projections.

\paragraph{Smoothing distributions} Using the sliced-based decomposition requires us to define distance between two sets of samples, in a single dimensional space. To this end, we will use a trick-of-trade applied commonly in statistics in order to compare samples or distributions which is to first smoothen (sample) distribution with a  Gaussian kernel. For the sample $R=(r_i)_{i=1..n} \subset \R$ by its smoothing with Gaussian kernel $N(0,\gamma)$ we understand
\begin{linenomath*}
\begin{equation*}
\reg_\gamma(R)=\frac{1}{n}\sum_i N(r_i,\gamma), 
\end{equation*}
\end{linenomath*}
where by $N(m,S)$ we denote the one-dimensional normal density with mean $m$ and variance~$S$. This produces a distribution with regular density, and is commonly used in kernel density estimation. If $R$ comes from the normal distribution with standard deviation close to one, the asymptotically optimal choice of $\gamma$ is given by the Silverman's rule of thumb $\gamma=(\tfrac{4}{3n})^{2/5}$, see~\citet{silverman1986density}. For a continuous density $f$, its smoothing $\reg_\gamma(f)$ is given by the convolution with $N(0,\gamma)$, and in the special case of Gaussians we have $\reg_\gamma(N(m,S))=N(m,S+\gamma)$. While in general kernel density estimations works well only in low-dimensional spaces, this fits the bill for us, as we will only compute distances on single dimensional projections of the data. 

\paragraph{Cramer-Wold distance} We are now ready to introduce the \emph{Cramer-Wold distance}. In a nutshell, we propose to compute the squared distance between two samples by considering the mean squared $L_2$ distance between their smoothed projections over all single dimensional subspaces. By the squared $L_2$ distance between functions $f,g:\R \to \R$ we refer to $\|f-g\|_2^2=\int |f(x)-g(x)|^2 dx$. A key feature of this distance is that it permits a closed-form in the case of normal distribution.

The following algorithm fully defines the Cramer-Wold distance between two samples $X=(x_i)_{i=1..n}, Y =(y_j)_{j=1..k} \subset \R^D$ (for illustration of Steps 1 and 2 see Figure~\ref{fig:slice}):
\begin{enumerate}
\item given $v$ in the unit sphere $S(0,1) \subset \R^D$ consider the projections $v^T X=(v^T x_i)_{i=1..n}$ and $v^T Y=(v^T y_j)_{j=1..k}$\,,
\item compute the squared $L_2$ distance of the densities $\reg_\gamma(v^TX)$ and $\reg_\gamma(v^TX)$:
\begin{linenomath*}
\begin{equation*}
\|\reg_\gamma(v^TX)-\reg_\gamma(v^TY)\|_2^2\,,
\end{equation*}
\end{linenomath*}
\item to obtain squared Cramer-Wold distance average
the above formula over all possible $v \in S_D$.
\end{enumerate}


The key theoretical outcome of this paper is that the computation of the Cramer-Wold distance can be simplified to a closed-form solution. Consequently, to compute the distance of two samples there is no need of finding the optimal transport like in WAE or the necessity to sample over the projections like in SWAE. 

\begin{theorem} \label{th:21}
Let $X=(x_i)_{i=1..n}$, $Y=(y_j)_{j=1..n} \subset \R^D$ be given\footnote{For clarity of presentation we provide here the formula for the case of samples of equal size.}.
We formally define the squared Cramer-Wold distance using the formula
\begin{linenomath*}
\begin{equation*}
d^2_{\CW}(X,Y):=\int_{S_D}\|\reg_\gamma(v^TX)-\reg_\gamma(v^TY)\|^2_{2} \dif \sigma_D(v).
\end{equation*}
\end{linenomath*}
Then
\begin{align}\label{eq:th31}
\begin{split}
d^2_{\CW}(&X,Y)=\tfrac{1}{2n^2\sqrt{\pi \gamma}}
\bigg(\sum \limits_{ii'}\phi_D\big(\tfrac{\|x_i-x_{i'}\|^2}{4\gamma}\big)
+\sum \limits_{jj'}\phi_D\big(\tfrac{\|y_j-y_{j'}\|^2}{4\gamma}\big)-2\sum \limits_{ij} \phi_D\big(\tfrac{\|x_i-y_j\|^2}{4\gamma}\big) \bigg),
\end{split}
\end{align}
where $\phi_D(s)=\1F1(\tfrac{1}{2};\tfrac{D}{2};-s)$ and $\1F1$ is the Kummer's confluent hypergeometric function (see, e.g.,~\citet{Barnard1998}). Moreover, $\phi_D(s)$ has the following asymptotic formula valid for $D\geq 20$:
\begin{equation} \label{eq:pi}
\phi_D(s)\approx (1+\tfrac{4s}{2D-3})^{-1/2}.
\end{equation}

\end{theorem}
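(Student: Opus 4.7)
The proof naturally splits into two parts: the closed-form identity \eqref{eq:th31} and the asymptotic approximation \eqref{eq:pi}.

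\textbf{Closed form.} The plan for the first part is a three-step reduction: expand, diagonalize, integrate. First I would expand the integrand pointwise in $v$. Since $\reg_\gamma(v^T X) = \tfrac{1}{n}\sum_i N(v^T x_i,\gamma)$ and similarly for $Y$, the squared $L_2$-norm of the difference factorizes into a sum of pairwise inner products. For two one-dimensional Gaussians, the standard identity
\[
\langle N(a,\gamma),N(b,\gamma)\rangle_{L_2} = N(a-b;0,2\gamma) = \tfrac{1}{2\sqrt{\pi\gamma}}\exp\!\bigl(-\tfrac{(a-b)^2}{4\gamma}\bigr)
\]
reduces the integrand to a weighted sum of terms of the form $\exp(-\tfrac{(v^T z)^2}{4\gamma})$, where $z$ ranges over the pairwise differences $x_i-x_{i'}$, $y_j-y_{j'}$, and $x_i-y_j$. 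Pulling the $\tfrac{1}{2n^2\sqrt{\pi\gamma}}$ prefactor outside already yields the exact scalar structure of \eqref{eq:th31}; it remains to evaluate $\int_{S_D} \exp(-\tfrac{(v^T z)^2}{4\gamma})\,d\sigma_D(v)$ for an arbitrary fixed $z\in\R^D$.

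\textbf{Reducing the sphere integral to $\phi_D$.} By rotational invariance of $\sigma_D$, this integral depends only on $\|z\|$, so I may assume $z = \|z\|e_1$ and write it as $\int_{S_D} \exp(-\tfrac{\|z\|^2}{4\gamma}(v^T e_1)^2)\,d\sigma_D(v)$. The key observation is that $t := (v^T e_1)^2$ for $v$ uniform on $S_D$ follows a $\mathrm{Beta}(\tfrac{1}{2},\tfrac{D-1}{2})$ distribution, which has density proportional to $t^{-1/2}(1-t)^{(D-3)/2}$ on $[0,1]$. Substituting and invoking Kummer's integral representation
\[
{}_1F_1(a;c;w) = \tfrac{\Gamma(c)}{\Gamma(a)\Gamma(c-a)}\int_0^1 e^{wt}\,t^{a-1}(1-t)^{c-a-1}\,dt
\]
with $a=\tfrac{1}{2}$, $c=\tfrac{D}{2}$, $w=-\tfrac{\|z\|^2}{4\gamma}$ identifies the integral as $\phi_D\!\bigl(\tfrac{\|z\|^2}{4\gamma}\bigr)$. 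Substituting this back into the expanded expression gives \eqref{eq:th31}.

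\textbf{Asymptotic formula.} For the second claim, the plan is to compare $\phi_D$ with the proposed approximation via their Maclaurin series. Using the Pochhammer expansion
\[
\phi_D(s) = \sum_{k\ge 0} \frac{(1/2)_k}{(D/2)_k}\,\frac{(-s)^k}{k!},
\]
and the binomial expansion
\[
\bigl(1+\tfrac{4s}{2D-3}\bigr)^{-1/2} = \sum_{k\ge 0}\frac{(1/2)_k}{k!}\Bigl(\tfrac{-4s}{2D-3}\Bigr)^{k},
\]
the claim becomes $\tfrac{1}{(D/2)_k} \approx \bigl(\tfrac{4}{2D-3}\bigr)^k$ for each $k$. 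A direct check at $k=1,2$ shows the choice $2D-3$ is precisely tuned to cancel the leading $1/D$ correction (both sides are $1/D + O(1/D^2)$ for $k=1$, and $3/(2D^2) + O(1/D^3)$ for $k=2$). The main technical obstacle is to then bound the cumulative error uniformly across all $k$ for $D\ge 20$: this I would do by estimating the ratio $\tfrac{(4/(2D-3))^k}{1/(D/2)_k}$ term-by-term and observing that the partial sums of both series are alternating with rapidly decreasing magnitudes, so the tails contribute negligibly. This tail control is the only delicate piece of the argument; the closed-form part is a clean calculation once the Beta-distribution identity for $(v^Te_1)^2$ is in hand.
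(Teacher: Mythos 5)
Your closed-form argument is correct and is essentially the paper's own proof: expand the squared $L_2$ norm into pairwise Gaussian products via $\langle N(a,\gamma),N(b,\gamma)\rangle_{2}=N(a-b,2\gamma)(0)=\tfrac{1}{2\sqrt{\pi\gamma}}e^{-(a-b)^2/(4\gamma)}$, use rotational invariance, and reduce the sphere integral to a one-dimensional integral identified with Kummer's function. Your observation that $(v^Te_1)^2\sim\mathrm{Beta}(\tfrac12,\tfrac{D-1}{2})$ together with Kummer's integral representation is the same computation as the paper's use of the slice-integration formula on spheres followed by the substitution $u=x^2$; the constants match, so this half needs no changes.

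The gap is in the asymptotic part, where your route would not deliver the claim as used. First, the binomial series $\sum_k\frac{(1/2)_k}{k!}\bigl(\tfrac{-4s}{2D-3}\bigr)^k$ converges only for $s<\tfrac{2D-3}{4}$, while $\phi_D$ is entire; a term-by-term Maclaurin comparison therefore cannot even in principle establish the approximation for all $s\ge 0$, and the large-$s$ regime (here $s=\|x_i-x_j\|^2/(4\gamma)$ routinely exceeds $D$) is exactly where the formula is needed. Second, the coefficientwise ``tuning'' claim is not right: $1/(D/2)_1=\tfrac{2}{D}$ while $\tfrac{4}{2D-3}=\tfrac{2}{D}+\tfrac{3}{D^2}+O(D^{-3})$, so the two sides agree only to leading order and the choice $2D-3$ does not cancel the next correction term by term (the stated values $1/D$ and $3/(2D^2)$ are also off). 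The constant $3$ is in fact tuned to the large-$s$ behaviour: $\phi_D(s)\sim\frac{\Gamma(D/2)}{\Gamma(\frac{D-1}{2})}s^{-1/2}$ as $s\to\infty$, and $\frac{\Gamma(D/2)}{\Gamma(\frac{D-1}{2})}\approx\bigl(\tfrac{D}{2}-\tfrac34\bigr)^{1/2}=\sqrt{\tfrac{2D-3}{4}}$ by the Tricomi--Erd\'elyi expansion of Gamma ratios, so that $(1+\tfrac{4s}{2D-3})^{-1/2}$ interpolates between the correct value at $s=0$ and the correct $s\to\infty$ tail. This is what the paper exploits: it keeps the integral representation $\phi_D(s)=\frac{\Gamma(D/2)}{\sqrt{\pi}\,\Gamma(\frac{D-1}{2})}\int_{-1}^{1}e^{-sx^2}(1-x^2)^{(D-3)/2}dx$, replaces $e^{-sx^2}$ by $(1-x^2)^{s}$ (accurate where the weight concentrates for large $D$), evaluates the resulting Beta integral exactly, and applies the Gamma-ratio asymptotics. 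To repair your argument, work with the integral (or some other representation valid for all $s\ge 0$) rather than the power series, or prove a uniform-in-$s$ error bound directly; the series comparison alone cannot reach all $s\ge 0$.
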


To prove the Theorem~\ref{th:21} we shall need the following crucial technical proposition.

\begin{proposition} \label{pr:31}
Let $z \in \R^D$ and $\gamma >0$ be given. Then 
\begin{equation} \label{eq:fr}
\int \limits_{S_{D}} N(v^Tz,\gamma)(0)\dif\sigma_D(v)=
\frac{1}{\sqrt{2\pi \gamma}}\phi_D\left(\frac{\|z\|^2}{2\gamma}\right).
\end{equation}
\end{proposition}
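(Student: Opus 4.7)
The plan is to reduce the spherical integral to a one-dimensional integral via rotational symmetry, and then recognize it as an Euler-type integral representation of Kummer's confluent hypergeometric function.

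First, I would write the integrand explicitly as
\[
N(v^T z, \gamma)(0) = \tfrac{1}{\sqrt{2\pi\gamma}} \exp\!\left(-\tfrac{(v^T z)^2}{2\gamma}\right),
\]
so that the pre-factor $1/\sqrt{2\pi\gamma}$ can be pulled out of the integral. The task then reduces to computing $\int_{S_D} \exp(-(v^T z)^2/(2\gamma))\,d\sigma_D(v)$.

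Next, I would exploit the rotational invariance of $\sigma_D$. Since the integrand depends on $v$ only through $v^T z$, we may assume without loss of generality that $z = \|z\| e_1$, reducing the integrand to a function of the single coordinate $v_1$. The marginal density of $v_1$ under the uniform measure on $S_D$ is the classical formula
\[
f(t) = \tfrac{\Gamma(D/2)}{\sqrt{\pi}\,\Gamma((D-1)/2)} (1-t^2)^{(D-3)/2}, \qquad t \in [-1,1].
\]
Setting $a = \|z\|^2/(2\gamma)$, the spherical integral becomes
\[
\tfrac{\Gamma(D/2)}{\sqrt{\pi}\,\Gamma((D-1)/2)} \int_{-1}^{1} e^{-at^2}(1-t^2)^{(D-3)/2}\,dt.
\]

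The key step is to recognize this as a Kummer function. Using symmetry in $t$ and substituting $u = t^2$, the integral becomes $\int_0^1 e^{-au} u^{-1/2}(1-u)^{(D-3)/2}\,du$. This matches exactly the Euler integral representation
\[
{}_1F_1(\alpha;\beta;z) = \tfrac{\Gamma(\beta)}{\Gamma(\alpha)\Gamma(\beta-\alpha)} \int_0^1 e^{zu} u^{\alpha-1}(1-u)^{\beta-\alpha-1}\,du
\]
with $\alpha = 1/2$, $\beta = D/2$, $z = -a$. Plugging in and using $\Gamma(1/2) = \sqrt{\pi}$, the gamma factors cancel cleanly against the prefactor $\Gamma(D/2)/(\sqrt{\pi}\,\Gamma((D-1)/2))$, leaving exactly $\phi_D(\|z\|^2/(2\gamma))$. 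Restoring the $1/\sqrt{2\pi\gamma}$ from the Gaussian normalization yields \eqref{eq:fr}.

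The only mildly delicate point is matching parameters and gamma-function constants in the Kummer integral representation; everything else is a routine application of rotational symmetry and the standard marginal of the uniform measure on the sphere. No separate treatment of $z = 0$ is needed since both sides continuously agree at $\phi_D(0) = 1$.
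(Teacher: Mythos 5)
Your proof is correct and follows essentially the same route as the paper: reducing by rotational invariance to an integral over the first coordinate (your marginal density of $v_1$ is exactly the paper's slice-integration formula, since $V_{D-1}/V_D=\Gamma(\tfrac{D}{2})/(\sqrt{\pi}\,\Gamma(\tfrac{D-1}{2}))$), and then identifying $\int_{-1}^{1}e^{-at^2}(1-t^2)^{(D-3)/2}\,dt$ with $\1F1(\tfrac12;\tfrac D2;-a)$. The only cosmetic difference is that you derive this last identity inline from Euler's integral representation via $u=t^2$, whereas the main text quotes it and the same derivation appears in the paper's Appendix~\ref{ap:A}.
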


\begin{proof}
By applying an orthonormal change of coordinates, without loss of generality, we may assume that $z=(z_1,0,\ldots,0)$, and then $v^Tz=z_1v_1$ for $v=(v_1,\ldots,v_D)$. Consequently we get
\begin{linenomath*}
\begin{equation*}
\begin{array}{c}
\int\limits_{S_{D}} N(v^Tz,\gamma)(0)\dif\sigma_D(v)=\int\limits_{S_{D}} N(z_1v_1,\gamma)(0) \dif\sigma_D(v).
\end{array}
\end{equation*}
\end{linenomath*}
Making use of the formula for slice integration of functions on spheres~\citep[Corollary~A.6]{axler2013harmonic} we get:
\begin{linenomath*}
\begin{equation*}
\begin{array}{r@{}l}
\int\limits_{S_D} f \dif\sigma_D =&
\tfrac{V_{D-1}}{V_D}
\int\limits_{-1}^1 (1-x^2)^{(D-3)/2} \cdot \int\limits_{S_{D-1}} 
f(x,\sqrt{1-x^2}\, \zeta ) \dif\sigma_{D-1}(\zeta) \dif x,
\end{array}
\end{equation*}
\end{linenomath*}
where $V_K$ denotes the surface volume of a sphere $S_K \subset \R^K$. 
Applying the above equality for the function $f(v_1,\ldots,v_D)=N(z_1 v_1,\gamma)(0)$ and 
$s=z_1^2/(2\gamma)=\|z\|^2/(2\gamma)$ we consequently get
that the LHS of \eref{eq:fr} simplifies to
\begin{linenomath*}
\begin{equation*}
\begin{array}{l}
\frac{V_{D-1}}{V_D}
\frac{1}{\sqrt{2\pi \gamma}}\int_{-1}^1  (1-x^2)^{(D-3)/2}\exp(-s x^2) \dif x,
\end{array}
\end{equation*}
\end{linenomath*}
which completes the proof since $V_{K}=\frac{2 \cdot \pi ^{\frac {K}{2}}}{\Gamma \left({\frac {K}{2}}\right)}$ and
\begin{linenomath*}
\begin{equation*}
\begin{array}{r@{}l}
\int_{-1}^1 \exp(-sx^2)
(1-x^2)&^{(D-3)/2} \dif x
=\sqrt{\pi } \frac{\Gamma \left(\frac{D-1}{2}\right)}{\Gamma(\frac{D}{2})} \, \1F1\left(\frac{1}{2};\frac{D}{2};-s\right)
\end{array}
\end{equation*}
\end{linenomath*}
\end{proof}

\begin{proof}[Proof of Theorem~\ref{th:21}] Directly from the definition of smoothing we obtain that
\begin{equation} \label{eq:si}
\begin{array}{r@{}l}
d^2_\CW(X,Y)= &
\int_{S_D}\big\|\tfrac{1}{n}\sum_i N(v^Tx_i,\gamma) - \tfrac{1}{n}\sum_j N(v^Ty_j,\gamma)\big\|^2_2 \dif \sigma_D(v).
\end{array}
\end{equation}
Applying now the one-dimensional formula for the $L_2$-scalar product of two Gaussians:
\begin{linenomath*}
\begin{equation*}
\il{N(r_1,\gamma_1),N(r_2,\gamma_2)}_2=N(r_1-r_2,\gamma_1+\gamma_2)(0)
\end{equation*}
\end{linenomath*}
and the equality $\|f-g\|^2_2=\il{f,f}_2+\il{g,g}_2-2\il{f,g}_2$ (where $\il{f,g}_2=\int f(x)g(x) dx$), we simplify the squared-$L_2$ norm in the integral of RHS of \eref{eq:si} to
\begin{linenomath*}
\begin{equation*}
\begin{array}{r@{}l}
\|\tfrac{1}{n}\sum \limits_i N(v^Tx_i,\gamma)-\tfrac{1}{n}\sum \limits_i N(v^Ty_j,\gamma)\big\|^2_2
=&\tfrac{1}{n^2}\il{\sum \limits_i N(v^Tx_i,\gamma),\sum \limits_{i} N(v^Tx_{i},\gamma)}_2\\
&\;+\tfrac{1}{n^2}\il{\sum \limits_j N(v^Ty_j,\gamma),\sum \limits_{j} N(v^Ty_{j},\gamma)}_2\\
&\;{-\tfrac{2}{n^2}\il{\sum \limits_i N(v^Tx_i,\gamma),\sum \limits_j N(v^Ty_j,\gamma)}_2}\\
=&\frac{1}{n^2}\sum \limits_{ii'} N(v^T(x_i-x_{i'}),2\gamma)(0)\\
&\;+\tfrac{1}{n^2}\sum \limits_{jj'} N(v^T(y_j-y_{j'}),2\gamma)(0)\\
&\;-\tfrac{2}{n^2}\sum \limits_{ij} N(v^T(x_i-y_j),2\gamma)(0).
\end{array}
\end{equation*}
\end{linenomath*}

Applying proposition~\ref{pr:31} directly we obtain formula \eref{eq:th31}. Proof of the formula for the asymptotics of the function $\phi_D$ is provided in in next section.
\end{proof}

Thus to estimate the distance of a given sample $X$ to some prior distribution $f$, one can follow the common approach and take the distance between $X$ and a sample from $f$. As~the main theoretical result of the paper we consider the following theorem, which states that in the case of standard Gaussian multivariate prior, we can completely reduce the need for sampling (we omit the proof since it is similar to that of Theorem~\ref{th:21}).

\begin{theorem}
Let $X=(x_i)_{i=1..n} \subset \R^D$ be a given sample. We formally define
\begin{linenomath*}
\begin{equation*}
\begin{split}
d^2_{\CW}&(X,N(0,I)):=\int_{S_D} \|\reg_\gamma(v^TX)-\reg_\gamma(N(0,1))\|^2_2 d\sigma_D(v).
\end{split}
\end{equation*}
\end{linenomath*}
Then
\begin{linenomath*}
\begin{equation*}\label{eq:disc}
\begin{array}{l}
d^2_{\CW}(X,N(0,I)) 
= \tfrac{1}{2n^2\sqrt{\pi}}
\big(\tfrac{1}{\sqrt{\gamma}}\sum_{i,j} 
\phi_D(\tfrac{\|x_i-x_j\|^2}{4\gamma}) +\tfrac{n^2}{\sqrt{1+\gamma}}-\tfrac{2n}{\sqrt{\gamma+\frac{1}{2}}} \sum_{i} \phi_D(\tfrac{\|x_i\|^2
}{2+4\gamma}) \big).
\end{array}
\end{equation*}
\end{linenomath*}
\end{theorem}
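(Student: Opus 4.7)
The plan is to mimic the proof of Theorem~\ref{th:21} almost verbatim, with the twist that one of the two ``samples'' is now the continuous distribution $N(0,I)$. First I would expand the squared $L_2$ norm inside the integral using the polarization $\|f-g\|_2^2=\langle f,f\rangle_2+\langle g,g\rangle_2-2\langle f,g\rangle_2$, producing three terms to handle separately. The $v$-dependence sits only in the first and the third terms, since the projection of $N(0,I)$ on any unit direction $v$ is just the one-dimensional $N(0,1)$, and therefore $\reg_\gamma(v^T N(0,I)) = N(0,1+\gamma)$ with no $v$-dependence.

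For the first term, $\langle \reg_\gamma(v^TX),\reg_\gamma(v^TX)\rangle_2$, I would apply exactly the same manipulation as in the proof of Theorem~\ref{th:21}: using the scalar-product formula $\langle N(r_1,\gamma_1),N(r_2,\gamma_2)\rangle_2=N(r_1-r_2,\gamma_1+\gamma_2)(0)$, this expands into $\tfrac{1}{n^2}\sum_{i,j} N(v^T(x_i-x_j),2\gamma)(0)$. Integrating over $S_D$ and applying Proposition~\ref{pr:31} with $z=x_i-x_j$ and $\gamma$ replaced by $2\gamma$ yields the first term $\tfrac{1}{2n^2\sqrt{\pi\gamma}}\sum_{ij}\phi_D(\|x_i-x_j\|^2/(4\gamma))$. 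The second term is the easiest: since $\reg_\gamma(N(0,1))=N(0,1+\gamma)$, one has $\langle N(0,1+\gamma),N(0,1+\gamma)\rangle_2 = N(0,2(1+\gamma))(0)=\tfrac{1}{2\sqrt{\pi(1+\gamma)}}$, which is independent of $v$ and integrates trivially over $S_D$.

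The cross term requires slightly more care. I would write $\langle \reg_\gamma(v^TX),N(0,1+\gamma)\rangle_2 = \tfrac{1}{n}\sum_i \langle N(v^Tx_i,\gamma),N(0,1+\gamma)\rangle_2 = \tfrac{1}{n}\sum_i N(v^Tx_i,1+2\gamma)(0)$, and then apply Proposition~\ref{pr:31} with $z=x_i$ and variance parameter $1+2\gamma$. This produces terms of the form $\tfrac{1}{\sqrt{2\pi(1+2\gamma)}}\phi_D(\|x_i\|^2/(2+4\gamma))$, and rewriting $\sqrt{2\pi(1+2\gamma)}=2\sqrt{\pi}\sqrt{\gamma+\tfrac{1}{2}}$ matches the third summand in the theorem.

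I do not anticipate a genuine obstacle; the main risk is purely bookkeeping, namely tracking the factors of $2$ in the denominators (the $\tfrac{1}{2n^2\sqrt{\pi}}$ prefactor in \eref{eq:disc}) and correctly identifying the argument $\|z\|^2/(2\gamma')$ that Proposition~\ref{pr:31} feeds into $\phi_D$ for each of the three variance choices ($2\gamma$, $2(1+\gamma)$, and $1+2\gamma$). Collecting the three contributions with signs $+,+,-2$ and factoring out $\tfrac{1}{2n^2\sqrt{\pi}}$ produces exactly \eref{eq:disc}.
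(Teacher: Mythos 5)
Your proposal is correct and is exactly the argument the paper intends: the paper omits this proof, stating it is similar to that of Theorem~\ref{th:21}, and your computation carries out that analogous expansion (polarization of the squared $L_2$ norm, the Gaussian scalar-product formula with variances $2\gamma$, $2(1+\gamma)$, $1+2\gamma$, and Proposition~\ref{pr:31} applied term by term), with all constants checking out against \eref{eq:disc}.
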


\section{Computation of \texorpdfstring{$\phi_D$}{}}\label{sec:compute-phi} \label{ap:A}

As it was shown in the previous section, the crucial role in the Cramer-Wold distance is given by the function
\begin{linenomath*}
\begin{equation*}
\phi_D(s)=\1F1(\tfrac{1}{2};\tfrac{D}{2};-s) \text{ for } s\geq 0.
\end{equation*}
\end{linenomath*}
Consequently, in this section we focus on the derivation of its basic properties.
First we will provide its approximate asymptotic formula valid for dimensions $D \geq 20$, and then we shall consider the special case of $D=2$ (see Figure~\ref{fig:mnista1}), where we provide the exact formua. 

To do so, let us first recall~\citep[Chapter~13]{abramowitz1964handbook} that the Kummer's confluent hypergeometric function $\1F1$ (denoted also by $M$) has the following integral representation
\begin{linenomath*}
\begin{equation*}
\1F1(a,b,z)=\frac{\Gamma (b)}{\Gamma (a)\Gamma (b-a)}
\int_{0}^{1} e^{{z u}}u^{{a-1}}(1-u)^{{b-a-1}}\,du,
\end{equation*}
\end{linenomath*}
valid for $a,b>0$ such that $b>a$. Since we consider that latent is at least of dimension $D\geq 2$, it follows that 
\begin{linenomath*}
\begin{equation*}
\phi_D(s)=\frac{\Gamma (\tfrac{D}{2})}{\Gamma (\tfrac{1}{2})\Gamma (\tfrac{D}{2}-\tfrac{1}{2})}
\int_{0}^{1} e^{-s u}u^{-1/2}(1-u)^{D/2-3/2}\,du.
\end{equation*}
\end{linenomath*}
By making a substitution $u=x^2$, $du=2xdx$, we consequently get
\begin{align}
\begin{split}
\phi_D(s)&=\tfrac{2\Gamma (D/2)}{\Gamma (1/2)\Gamma (D/2-1/2)}
\int_{0}^{1} \!\!\!\! e^{-s x^2}(1-x^2)^{(D-3)/2}\,dx\\
&=\tfrac{\Gamma (D/2)}{\Gamma (1/2)\Gamma (D/2-1/2)}
\int_{-1}^{1} \!\!\!\! e^{-s x^2}(1-x^2)^{(D-3)/2}\,dx.\label{eq:np}
\end{split}
\end{align}

\begin{figure*}[htb]
\normalsize
\begin{center}
\includegraphics[width=0.28\textwidth]{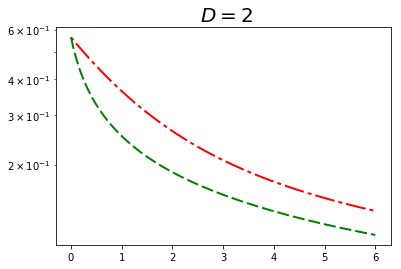}
\includegraphics[width=0.28\textwidth]{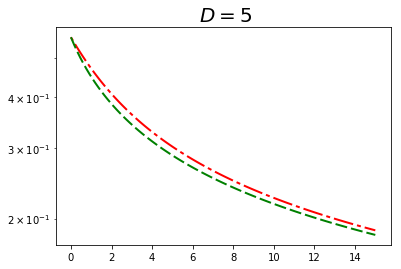}
\includegraphics[width=0.28\textwidth]{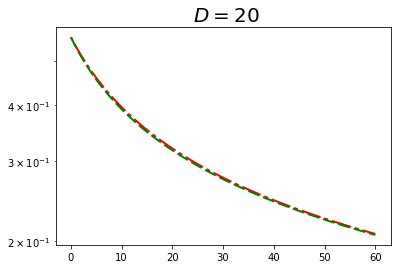}
\end{center}
\caption{Comparison of $\phi_D$ value (red line) with the approximation given by \eref{eq:Phi} (green line) in the case of dimensions $D=2,5,20$.
Observe that for $D=20$, the functions practically coincide.}
\label{fig:mnista1}
\end{figure*}

\begin{proposition} \label{pr:1}
For large\footnote{In practice we can take $D\geq 20$.} $D$ we have
\begin{equation} \label{eq:Phi}
\phi_D(s) \approx
(1+\tfrac{4s}{2D-3})^{-1/2}
\text{ for all }s \geq 0.
\end{equation}
\end{proposition}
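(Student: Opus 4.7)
The plan is to apply Laplace's method to the integral representation \eref{eq:np}, with one refinement: instead of approximating $(1-x^2)^{(D-3)/2}$ by a Gaussian fitted to its quadratic Taylor expansion at $x=0$, I would match the Gaussian's \emph{total mass} to the exact value $B(1/2,(D-1)/2)$. This refinement is what produces the sharper coefficient $(2D-3)/4$ in the denominator rather than the naive $(D-3)/2$ one would get from a pure curvature match.

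Concretely, I would rewrite
$$
\phi_D(s)=\frac{\int_{-1}^{1} e^{-sx^2}(1-x^2)^{(D-3)/2}\,dx}{\int_{-1}^{1}(1-x^2)^{(D-3)/2}\,dx},
$$
which follows from \eref{eq:np} together with $\phi_D(0)=1$. I then replace the sharply peaked bump $(1-x^2)^{(D-3)/2}$ by a Gaussian $e^{-\beta x^2}$ with $\beta=[\Gamma(D/2)/\Gamma((D-1)/2)]^2$, chosen so that
$$
\int_{\R}e^{-\beta x^2}\,dx=\sqrt{\pi/\beta}=\sqrt{\pi}\,\frac{\Gamma((D-1)/2)}{\Gamma(D/2)}=\int_{-1}^{1}(1-x^2)^{(D-3)/2}\,dx,
$$
and which also agrees with the bump pointwise at $x=0$ (both equal $1$). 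Under this substitution the ratio collapses to a standard Gaussian MGF, giving
$$
\phi_D(s)\approx\sqrt{\beta/(\beta+s)}.
$$

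To turn this into the stated formula I would invoke the Tricomi--Erd\'elyi asymptotic for the ratio of gamma functions \cite{tricomi1951asymptotic},
$$
\frac{\Gamma(z+1/2)}{\Gamma(z)}=\sqrt{z}\left(1-\frac{1}{8z}+O(z^{-2})\right),
$$
applied with $z=(D-1)/2$; squaring gives $\beta=(D-1)/2-1/4+O(1/D)=(2D-3)/4+O(1/D)$. Substituting into $\sqrt{\beta/(\beta+s)}$ yields precisely $(1+4s/(2D-3))^{-1/2}$, which is the claim.

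The main obstacle is to justify the Gaussian substitution with an error that is uniform in $s\ge 0$. I would control this by expanding
$$
\tfrac{D-3}{2}\ln(1-x^2)=-\tfrac{D-3}{2}x^2-\tfrac{D-3}{4}x^4-\cdots,
$$
rescaling $x\mapsto x/\sqrt{D}$ so that the quartic and higher terms contribute only $O(1/D)$ on the scale where the Gaussian is effectively supported, and bounding the tail outside $[-1,1]$, which is exponentially small in $D$. Because the mass-matching choice of $\beta$ forces the $s=0$ error to vanish exactly, the residual error in $\phi_D(s)$ is $O(1/D)$ uniformly in $s\ge 0$, consistent with the excellent agreement reported in Figure~\ref{fig:mnista1} for $D\ge 20$.
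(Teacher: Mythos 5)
Your argument is correct (at the same ``$\approx$'' level of rigor as the paper's own proof), and it reaches the formula by a genuinely different route. The paper keeps the bump $(1-x^2)^{(D-3)/2}$ intact and instead replaces the Gaussian factor, $e^{-sx^2}\approx(1-x^2)^s$, so that the whole integral in \eref{eq:np} becomes an exact Beta integral $\sqrt{\pi}\,\Gamma(s+\tfrac{D}{2}-\tfrac12)/\Gamma(s+\tfrac{D}{2})$; it then applies the Tricomi--Erd\'elyi expansion twice (once in $D$, once in $s+\tfrac{D}{2}$), choosing the shift so that $\alpha+\beta=1$ kills the $1/z$ correction, which is exactly where the $-\tfrac34$ shift, i.e.\ the coefficient $2D-3$, comes from. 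You do the opposite swap: Gaussianize the bump, $(1-x^2)^{(D-3)/2}\to e^{-\beta x^2}$, with $\beta=\bigl[\Gamma(\tfrac{D}{2})/\Gamma(\tfrac{D-1}{2})\bigr]^2$ fixed by mass matching rather than curvature matching, so the ratio collapses to $(1+s/\beta)^{-1/2}$ and only one Gamma-ratio expansion is needed to identify $\beta=\tfrac{2D-3}{4}+O(1/D)$; your arithmetic here checks out and agrees with the paper's $(\tfrac{D}{2}-\tfrac34)^{1/2}$ estimate. What each buys: your intermediate formula $(1+s/\beta)^{-1/2}$ is exact at $s=0$ and reproduces the exact large-$s$ prefactor $\Gamma(\tfrac{D}{2})/\Gamma(\tfrac{D-1}{2})\,s^{-1/2}$, so the dependence on $s$ is handled in one clean stroke and the correct constant is explained conceptually (mass matching, not curvature); the paper's route avoids any Laplace-type tail/rescaling discussion altogether because after its swap every integral is evaluated in closed form, at the cost of a second, $s$-dependent Gamma-ratio asymptotic. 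One caveat on your error discussion: the fact that the $s=0$ error vanishes by construction does not by itself yield uniformity in $s$; the uniform $O(1/D)$ claim really rests on your rescaling/tail estimates (quartic term $O(1/D)$ on the scale $x\sim D^{-1/2}$, exponentially small mass outside $[-1,1]$), which is fine but should be stated as the actual reason -- the paper makes no sharper claim either.
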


\begin{proof}
By \eref{eq:np} we have to estimate asymptotics of
\begin{linenomath*}
\begin{equation*}
\begin{array}{l}
\phi_D(s)=\frac{\Gamma (\tfrac{D}{2})}{\Gamma (\tfrac{1}{2})\Gamma (\tfrac{D}{2}-\tfrac{1}{2})}
\int_{-1}^{1} e^{-sx^2}(1-x^2)^{(D-3)/2}\,dx.
\end{array}
\end{equation*}
\end{linenomath*}
Since for large $D$, for all $x \in [-1,1]$ we have
\begin{linenomath*}
\begin{equation*}
\begin{array}{r@{}l}
(1-x^2)^{(D-3)/2} e^{-sx^2} &\approx(1-x^2)^{(D-3)/2} \cdot (1-x^2)^s =(1-x^2)^{s+(D-3)/2}, 
\end{array}
\end{equation*}
\end{linenomath*}
we get
\begin{linenomath*}
\begin{equation*}
\begin{array}{r@{}l}
\phi_D(s) &\approx 
\frac{\Gamma(\tfrac{D}{2})}{\Gamma(\tfrac{D-1}{2})\sqrt{\pi}} \cdot 
\int_{-1}^1  (1-x^2)^{s+(D-3)/2} \dif x = \frac{\Gamma(\tfrac{D}{2})}{\Gamma(\tfrac{D-1}{2})\sqrt{\pi}}
\cdot \sqrt{\pi} \frac{\Gamma(s+\frac{D}{2}-\frac{1}{2})}{\Gamma(s+\frac{D}{2})}.
\end{array}
\end{equation*}
\end{linenomath*}
To simplify the above we apply the formula (1) from~\citep{tricomi1951asymptotic}:
\begin{linenomath*}
\begin{equation*}
\frac{\Gamma(z+\alpha)}{\Gamma(z+\beta)}=z^{\alpha-\beta}(1+\frac{(\alpha-\beta)(\alpha+\beta-1)}{2z}+O(|z|^{-2})),
\end{equation*}
\end{linenomath*}
with $\alpha,\beta$ fixed so that $\alpha+\beta=1$ (so only the error term of order $O(|z|^{-2})$ remains), and get the following
\begin{equation}
\begin{array}{r@{}l}
{\Gamma(\frac{D}{2})}/{\Gamma(\frac{D-1}{2})}&=
\frac{\Gamma((\frac{D}{2}-\frac{3}{4})+\frac{3}{4})}{\Gamma((\frac{D}{2}-\frac{3}{4})+\frac{1}{4})}
\approx \left(\frac{D}{2}-\frac{3}{4}\right)^{\frac{1}{2}}
\\
{\Gamma(s+\frac{D}{2}-\frac{1}{2})}/{\Gamma(s+\frac{D}{2})}
&\approx \left(s+\frac{D}{2}-\frac{3}{4}\right)^{-\frac{1}{2}}.
\end{array}
\end{equation}
Summarizing,
\begin{linenomath*}
\begin{equation*}
\phi_D(s) \approx
\frac{(\tfrac{D}{2}-\tfrac{3}{4})^{1/2}}{(s+\tfrac{D}{2}-\tfrac{3}{4})^{1/2}}
=
(1+\tfrac{4s}{2D-3})^{-1/2}.
\end{equation*}
\end{linenomath*}
\end{proof}

In general one can obtain the iterative direct formulas for function $\phi_D$ 
with the use of $\erf$ and modified Bessel functions of the first kind $I_0$ and $I_1$, but for large $D$ they are of little numerical value. We consider here only the special case $D=2$ since it is used in the paper for illustrative reasons in the latent for the MNIST data set. Since
we have the equality~\citep[(8.406.3) and (9.215.3)]{gradshteyn2014table}:
\begin{linenomath*}
\begin{equation*}
\phi_2(s)=\1F1(\tfrac{1}{2},1,-s)= e^{-\frac{s}{2}} I_0\left(\frac{s}{2}\right),
\end{equation*}
\end{linenomath*}
to practically implement $\phi_2$ we apply the approximation of $I_0$ from~\citet[p.~378]{abramowitz1964handbook} given in the following remark.


\begin{remark} \label{pr:p}
Let $s \geq 0$ be arbitrary and let $t=s/7.5$. Then
\begin{linenomath*}
\begin{equation*}
\begin{array}{l}
\phi_2(s)\approx 
\begin{cases}
\begin{array}{l}
\!\!e^{-\tfrac{s}{2}}\! \cdot \! (1\+3.51562t^2\+3.08994t^4 \+1.20675t^6+ 0.26597t^8\+0.03608t^{10}\+0.00458t^{12}) \\
\quad\text{ for }s \in [0,7.5], \\
\!\!\sqrt{\frac{2}{s}} \! \cdot \! (0.398942\+0.013286t^{-1} 
\+ 0.002253t^{-2} \!-\! 0.001576t^{-3} 
\+ 0.00916t^{-4}\!\!-\!0.020577t^{-5} \\
\quad+ 0.026355t^{-6}\!\!-\!0.016476t^{-7} 
\+ 0.003924t^{-8}) 
\quad\text{ for }s \geq 7.5.
\end{array}
\end{cases}
\end{array}
\end{equation*}
\end{linenomath*}
\end{remark}

\section{Cramer-Wold kernel}\label{sec:cw-kernel} \label{ap:CW}

In this section we first formally define the Cramer-Wold metric for arbitrary measures, and later show that it is given by a characteristic kernel which has closed-form for spherical Gaussians. For more information on kernels, and kernel embedding of distributions we refer the reader to~\citet{muandet2017kernel}.

Let us first introduce the general definition of the $\CW$-metric. To do so we generalize the notion of smoothing for arbitrary measures $\mu$ by the formula:
\begin{linenomath*}
\begin{equation*}
\reg_\gamma(\mu)=\mu * N(0,\gamma I),
\end{equation*}
\end{linenomath*}
where $*$ denotes the convolution operator for two measures, and we identify the normal density $N(0,\gamma I)$ with the measure it introduces. It is well-known that the resulting measure has the density given by 
\begin{linenomath*}
\begin{equation*}
x \to \int N(x,\gamma I)(y) d\mu(y).
\end{equation*}
\end{linenomath*}
Clearly 
\begin{linenomath*}
\begin{equation*}
\reg_{\gamma}(N(0,\alpha I))=N(0,(\alpha+\gamma)I)).
\end{equation*}
\end{linenomath*}
Moreover, by applying the characteristic function one obtains that if the smoothing of two measures coincide, then the measures also coincide:
\begin{equation}\label{eq:CW1}
\reg_\gamma (\mu)=\reg_\gamma(\mu)
\implies \mu=\nu.
\end{equation}

We also need to define the transport of 
the density by the projection $x \to v^Tx$, where $v$ is chosen from the unit sphere $S_D$. The definition is formulated so that if $\X$ is a random vector with density $f$, then $f_v$ is the density of the random vector $\X_v:=v^T \X$. Then we have
\begin{linenomath*}
\begin{equation*}
f_v(r)=\int_{y:y-rv \perp v} f(z) d_{D-1}(z),
\end{equation*}
where $d_{D-1}$ denotes the $D-1$-dimensional Lebesgue measure.
In general, if $\mu$ is a measure on $\R^D$, then $\mu_v$ is the measure defined on $\R$ by the formula
\begin{equation*}
\mu_v(A)=\mu(\{x: v^T x \in A\}).
\end{equation*}
\end{linenomath*}

Since, if a random vector $\X$  has the density $N(a,\gamma I)$, then the random variable $\X_v$ has the density $N(v^Ta,\alpha)$, we directly conclude that 
\begin{linenomath*}
\begin{equation*}
N(a,\gamma I)_v=N(v^Ta,\gamma).
\end{equation*}
\end{linenomath*}

It is also worth noticing, that due to the fact that the projection of a Gaussian is a Gaussian, the smoothing and projection operators commute, i.e.:
\begin{linenomath*}
\begin{equation*}
\reg_\gamma (\mu_v)=(\reg_\gamma \mu)_v.
\end{equation*}
\end{linenomath*}

Given fixed $\gamma>0$, the two above notions allow us to formally define the $\CW$-distance of two measures $\mu$ and $\nu$ by the formula
\begin{equation} \label{eq:dcw}
d_{\CW}^2(\mu,\nu)=\int_{S_D} \|\reg_\gamma(\mu_v)-\reg_\gamma(\nu_v)\|^2_{L_2} d\sigma_D(v).
\end{equation}
First observe that this implies that $\CW$-distance is given by the kernel function
\begin{linenomath*}
\begin{equation*}
k(\mu,\nu)=\int_{S_D} \il{\reg_\gamma(\mu_v),\reg_\gamma(\nu_v)}_{L_2} d\sigma_D(v).
\end{equation*}
\end{linenomath*}
Let us now prove that the function $d_{\CW}$ defined by equation~\eref{eq:dcw} is a metric (which, in the kernel function literature means that the kernel is characteristic). 

\begin{theorem}
The function $d_{\CW}$ is a metric.
\end{theorem}

\begin{proof}
Since $d_{\CW}$ comes from a scalar product, we only need to show that if a distance of two measures is zero, the measures coincide.

So let $\mu,\nu$ be given measures such that $d_{\CW}(\mu,\nu)=0$. This implies that 
\begin{linenomath*}
\begin{equation*}
\reg_\gamma(\mu_v)=\reg_\gamma(\nu_v).
\end{equation*}
\end{linenomath*}
By \eref{eq:CW1} this implies that $\mu_v=\nu_v$. Since this holds for all $v \in S_D$, by the Cramer-Wold Theorem we obtain that $\mu=\nu$.
\end{proof}

Thus we can summarize the above by saying that the Cramer-Wold kernel is a characteristic kernel which, by the definition and \eqref{eq:fr}, has the closed-form the scalar product of two radial Gaussians given by:
\begin{equation} \label{eq:formula}
\il{N(x,\alpha I),N(y,\beta I)}_{\CW}=\tfrac{1}{\sqrt{2\pi (\alpha+\beta+2\gamma)}}\phi_D\left(\tfrac{\|x-y\|^2}{2(\alpha+\beta+2\gamma)}\right).
\end{equation}

\begin{remark}
Observe, that except for the Gaussian kernel it is the only kernel which has the closed-form for the spherical Gaussians which, as we discuss in the next section, is important, as the RBF (Gaussian) kernels cannot be successfully applied in AutoEncoder based generative models. The reason is that the  derivative of Gaussian decreases too fast, and therefore it does not enforce the proper learning of the model (see also~\citet[Section 4, WAE-GAN and WAE-MMD specifics]{tolstikhin2017wasserstein}).
\end{remark}

\section{Cramer-Wold AutoEncoder model (CWAE)}\label{se:CWAE}

This section is devoted to the construction of CWAE. Since we base our construction on the AutoEncoder, to establish notation let us formalize it here.

\paragraph{AutoEncoder.} Let $X=(x_i)_{i=1..n} \subset \R^N$ be a given data set. The basic aim of AE is to transport the data to a typically, but not necessarily, lower dimensional latent space $\Z=\R^D$ while minimizing the reconstruction error. Thus, we search for an encoder $\E:\R^n \to \Z$ and decoder $\D:\Z \to \R^n$ functions, which minimizes the mean squared error $MSE(X;\E,\D)$ on $X$ and its reconstructions $\D(\E x_i)$. 

\paragraph{AutoEncoder based generative model.} CWAE, similarly to WAE, is a classical AutoEncoder model with modified cost function which forces the model to be generative, i.e. ensures that the data transported to the latent space comes from the (typically Gaussian) prior $f$. This statement is formalized by the following important remark, see also~\citet{tolstikhin2017wasserstein}.

\begin{remark} \label{re:wazny}
Let  $\X$ be an $N$-dimensional random vector from which our data set was drawn, and let $\Y$ be a~random vector with 
a density $f$ on latent $\Z$. 

Suppose that we have constructed
functions $\E:\R^N \to \Z \text{ and }\D:\Z \to \R^N$ (representing the encoder and the decoder) such that\footnote{We recall that for function (or in particular random vector) $\X:\Omega \to \R^D$ by $\mathrm{image}(\X)$ we denote the set consisting of all possible values $\X$ can attain, i.e. $\{\X(\omega):\omega \in \Omega\}$.} 
\begin{enumerate}
\item $\D(\E x)=x$ for $x \in \mathrm{image}(\X)$,
\item random vector $\E\X$ has the distribution $f$.
\end{enumerate}
Then by the point 1 we obtain that $\D(\E \X)=\X$, and therefore 
\begin{linenomath*}
\begin{equation*}
\D \Y \text{ has the same distribution as } \D(\E \X) =\X.
\end{equation*}
\end{linenomath*}
This means that to produce samples from $\X$ we can instead produce samples from $\Y$ and map them by the decoder $\D$.
\end{remark}
Since an estimator of the image of the random vector $\X$ is given by its sample $X$, we conclude that a generative model is correct if it has small reconstruction error and resembles the prior distribution in the latent. Thus, to construct a generative AutoEncoder model (with Gaussian prior), we add to its cost function a measure of distance of a given sample from normal distribution.

\paragraph{CWAE cost function.} Once the crucial ingredient of CWAE is ready, we can describe its cost function. To ensure that the data transported to the latent space $\Z$ are distributed according to the standard normal density, we add to the cost function logarithm\footnote{We take the logarithm of the Cramer-Wold distance to improve balance between the two terms in the objective function.} of the Cramer-Wold distance from standard multivariate normal density $d^2_{\CW}(X,N(0,I))$:
\begin{linenomath*}
\begin{equation*} \label{eq:cost}
\mathrm{cost}(X;\E,D)= MSE(X;\E,\D) + \lambda \log d^2_\CW(\E X,N(0,I)).
\end{equation*}
\end{linenomath*}

\begin{figure*}[htb]
\begin{tabular}{@{}c@{}c@{}c@{}c@{}}
 &  Test interpolation &  Test reconstruction &  Random sample \\
 \rotatebox{90}{ \qquad\qquad VAE} & \,
\includegraphics[height=3.9cm]{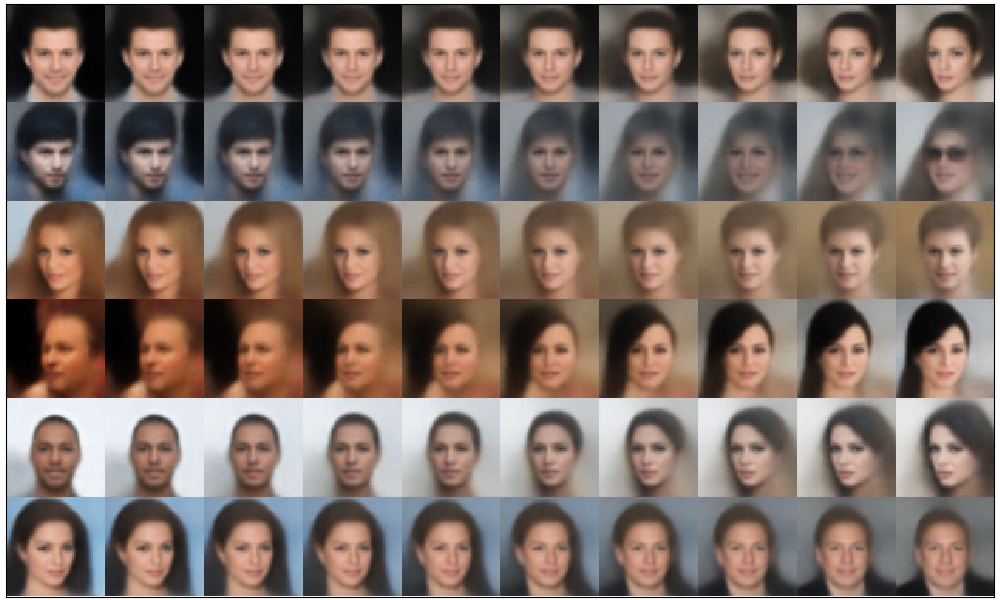} & \,
\includegraphics[height=3.9cm]{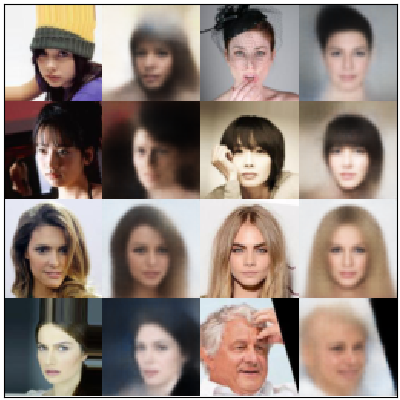}& \, 
\includegraphics[height=3.9cm]{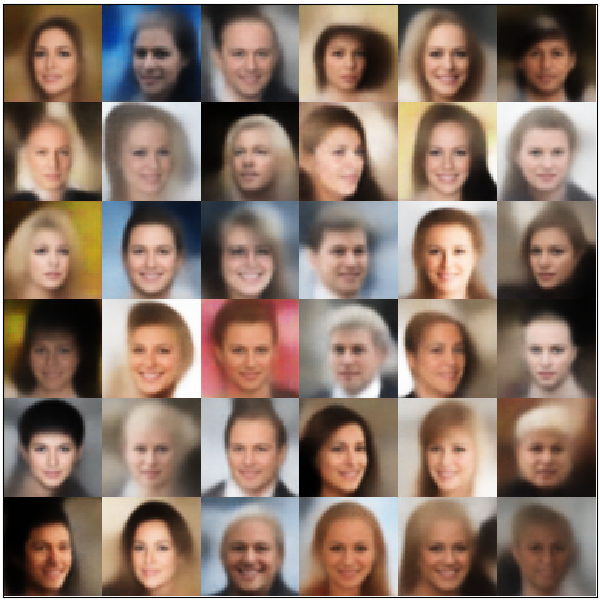} \\ 
 \rotatebox{90}{ \qquad\qquad WAE-MMD} & \,
\includegraphics[height=3.9cm]{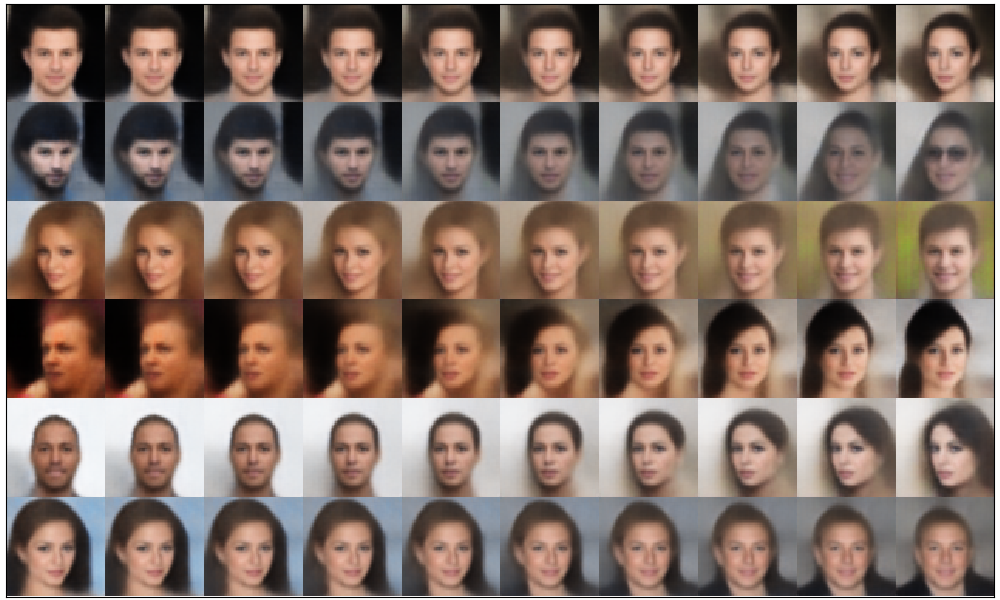} & \,
\includegraphics[height=3.9cm]{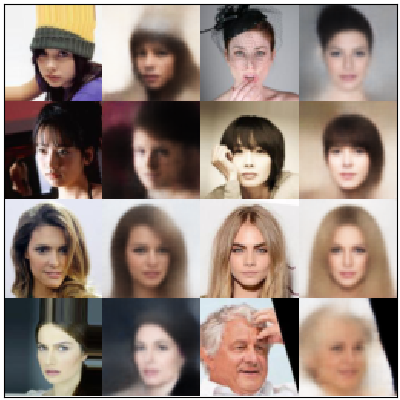}& \, 
\includegraphics[height=3.9cm]{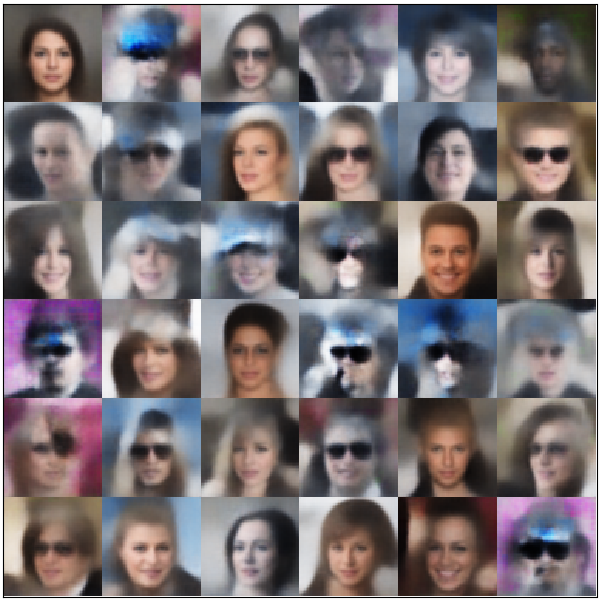} \\ 
 \rotatebox{90}{ \qquad\qquad SWAE} & \,
\includegraphics[height=3.9cm]{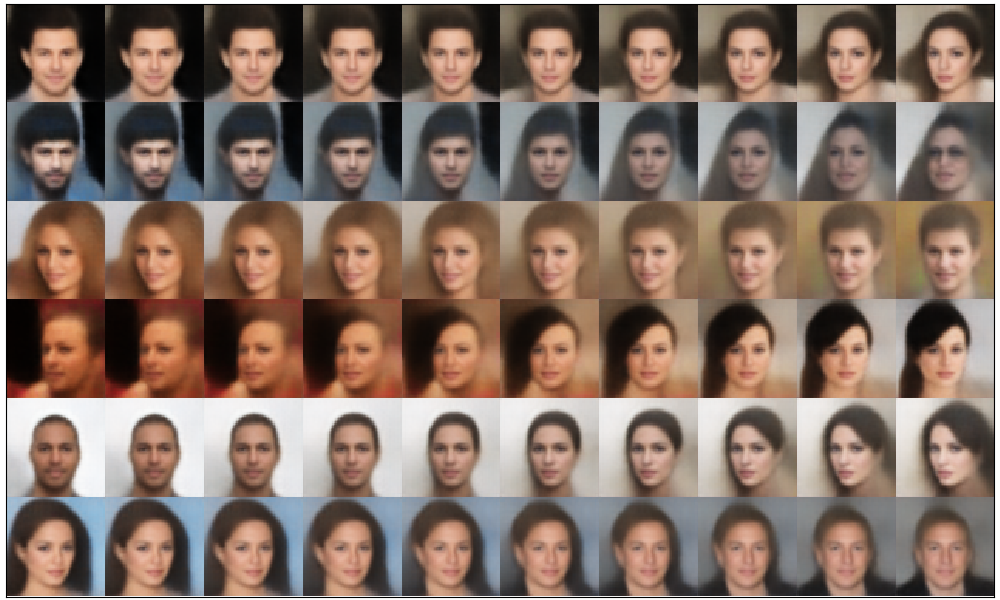} & \,
\includegraphics[height=3.9cm]{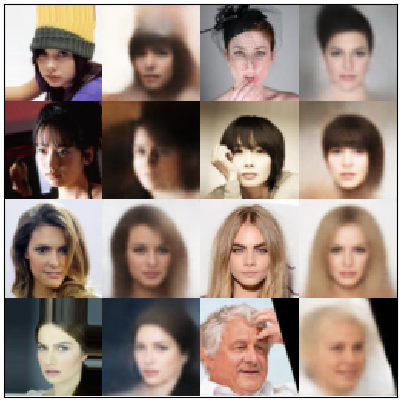}& \, 
\includegraphics[height=3.9cm]{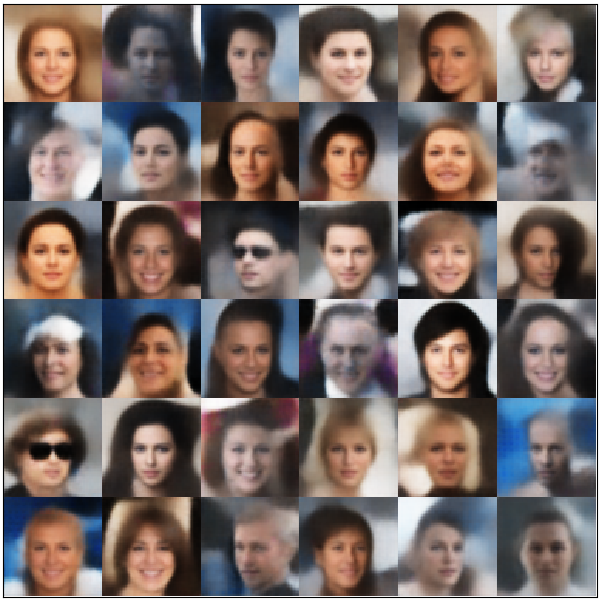} \\ 
\rotatebox{90}{ \qquad\qquad CWAE} & \,
\includegraphics[height=3.9cm]{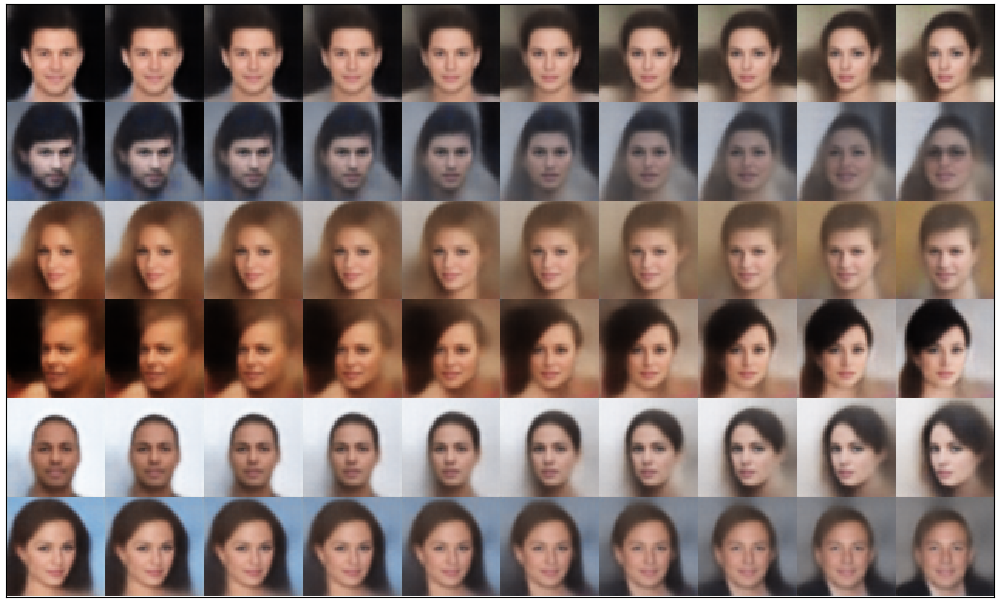} & \,
\includegraphics[height=3.9cm]{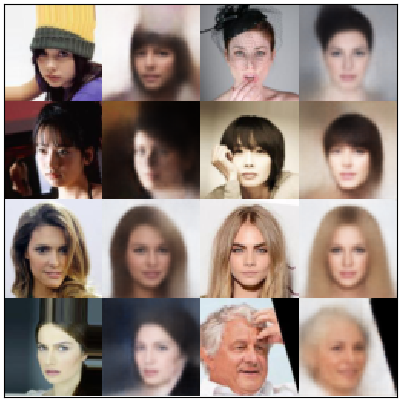}& \, 
\includegraphics[height=3.9cm]{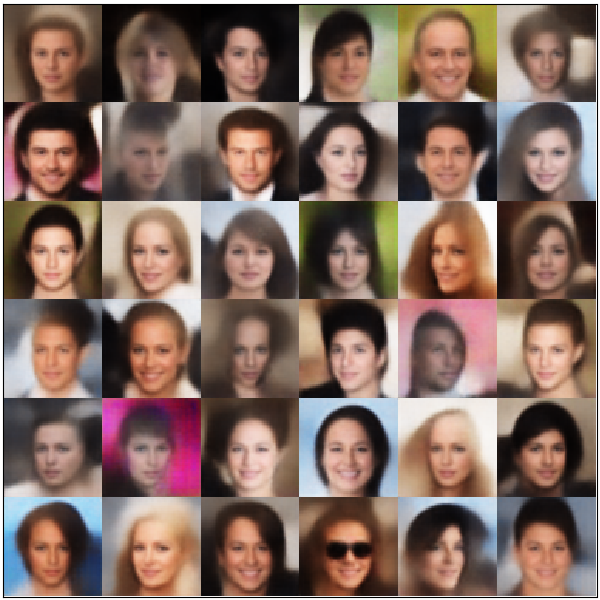} \\ 
\end{tabular}
\caption{Results of VAE, WAE-MMD, SWAE, and CWAE models trained on CELEB~A dataset using the WAE architecture from~\citep{tolstikhin2017wasserstein}. In “test reconstructions” odd rows correspond to the real test points.}
\label{fig:celeb}
\end{figure*}

Since the use of special functions involved in the formula for Cramer-Wold distance might be cumbersome, we apply in all experiments (except for the illustrative 2D case) the asymptotic form \eref{eq:Phi} of function $\phi_D$:
\begin{linenomath*}
\begin{equation*} 
\begin{array}{r@{}l}
2\sqrt{\pi}d^2_\CW(&X) \approx
\tfrac{1}{n^2}\sum_{ij} 
(\gamma_n+\tfrac{\|x_i-x_j\|^2}{2D-3})^{-1/2}+(1+\gamma_n)^{-1/2}- 
\tfrac{2}{n} \sum_{i} (\gamma_n+\tfrac{1}{2}+\tfrac{\|x_i\|^2}{2D-3})^{-1/2},
\end{array}
\end{equation*}
\end{linenomath*}
where $\gamma_n=(\frac{4}{3n})^{2/5}$ is chosen by the Silverman's rule of thumb~\citep{silverman1986density}.

\paragraph{Comparison with WAE and SWAE models.} 
Finally, let us briefly recapitulate differences between the introduced CWAE, WAE variants~\citep{tolstikhin2017wasserstein} and SWAE~\citep{kolouri2018sliced}. In contrast to WAE-MMD and SWAE, CWAE model \emph{does not} require sampling from  normal distribution (as in WAE-MMD) or over slices (as in SWAE) to evaluate its cost function, and in this sense uses a closed formula cost function. In contrast to WAE-GAN, our objective does not require a separately trained neural network to approximate the optimal transport function, thus avoiding pitfalls of adversarial training. 

\paragraph{CWAE vs WAE-MMD.} 
We shall now compare CWAE model to WAE-MMD. In particular we show that CWAE can be seen as the intersection of the sliced-approach together with MMD-based models. WAE-MMD model uses approximations, while CWAE uses a closed form, which bears an impact on training. It results in more leveled drop of distance weight, with even negative values in case of of WAE-MMD estimator, see Fig.~\ref{fig:synthetic-latent}. See paragraph below on using a logarithm in cost function.

Since both WAE and CWAE use kernels to discriminate between sample and normal density, to compare the models we first describe the WAE model. WAE cost function for a given characteristic kernel $k$ and sample $X=(x_i)_{i=1..n} \subset \R^D$ (in the $D$-dimensional latent) is given by 
\begin{equation}\label{eq:wae-cost}
\text{WAE cost}=MSE+\lambda \cdot d^2_k(X,Y),
\end{equation}
where $Y=(y_i)_{i=1..n}$ is a sample from the standard normal density $N(0,I)$, and $d^2_k(X,Y)$ denotes the kernel-based distance between the probability distributions representing $X$ and $Y$, that is $\frac{1}{n}\sum_i \delta_{x_i}$ and $\frac{1}{n}\sum_i \delta_{y_i}$, where $\delta_z$ denotes the atom Dirac measure at $z \in \R^D$.
The inverse multiquadratic kernel IMQ $k$ is chosen as default
\begin{linenomath*}
\begin{equation*}
k(x,y)=\frac{C}{C+\|x-y\|_2^2},
\end{equation*}
\end{linenomath*}
where in experiments in~\citet{tolstikhin2017wasserstein}, a value $C=2D \sigma^2$ was used, where $\sigma$ is the hyper-parameter denoting the size of the normal density. Thus the model has hyper-parameters $\lambda$ and $\sigma$, which were chosen to be $\lambda=10,\sigma^2=1$ in MNIST, $\lambda=100,\sigma^2=2$ in CELEB~A. 

Now let  us describe the CWAE model. CWAE cost function for a sample $X=(x_i)_{i=1..n} \subset \R^D$ (in the $D$-dimensional latent) is given by 
\begin{equation}\label{eq:cwae-cost}
\text{CWAE cost}=MSE + \lambda  \log d^2_{\CW}(X,N(0,I)),
\end{equation}
where distance between the sample and standard normal distribution is taken with respect to the Cramer-Wold kernel with a regularizing hyperparameter $\gamma$ given by the Silverman's rule of thumb (the motivation for such a choice of hyper-parameters is explained in Section~\ref{sec:cw_distance}).

\paragraph{CWAE cost and the use of logarithm}
In Eq.~\eqref{eq:cwae-cost} we have introduced the CWAE cost which differs from the WAE model cost Eq.~\eqref{eq:wae-cost}, by the use of a logarithm function. The model works perfectly when no function modifier is used, but we have noticed that it gained in speed when logarithm was applied, see Figure~\ref{fig:cw_vs_logcw}. One might ask why? It seems that two factors are the most important.

\begin{figure}[htb]
\centering
\begin{tabular}{@{}c@{}c@{}}
\includegraphics[width=0.76\textwidth]{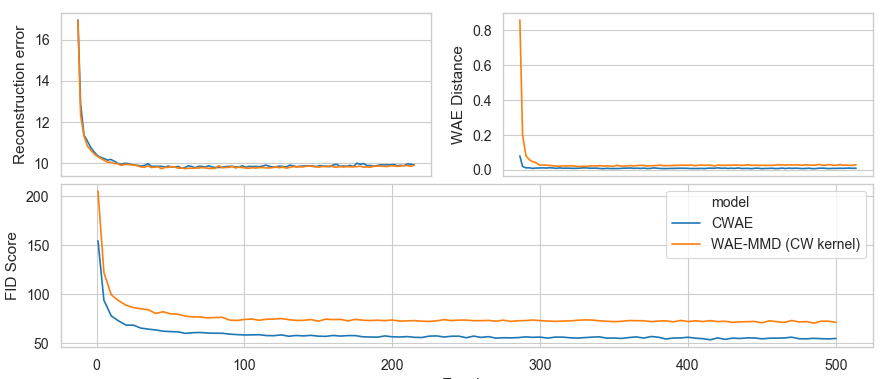}
\end{tabular}
\caption{Competition between CWAE and WAE-MMD witch CW kernel on Fashion-MNIST dataset.}
\label{fig:cw_vs_logcw}
\end{figure}

First, it might be noticed, that during the first few iterations of model training, it is normal for the variation of the errors to be high, see Figure~\ref{fig:cw_vs_logcw}. In case of CWAE, the $D_{cw}$ cost has around $10$ times larger scale than $d_{k}$ cost has in case of WAE. The $\log$ tones it done substantially, increasing the stability of learning, which is not needed in WAE. The network finds a smoother way to increase the normality of the latent space, thus speeding up training. 

Second, it might be hypothesized, that at the beginning of training values in the latent are more uniform, becoming more and more normal (let's suppose a normal prior). A synthetic data experiment showing this is given in Figure~\ref{fig:synthetic-latent}. The logarithmic cost drops-off much quicker pulling the model towards quicker minimization. At the same time, when the WAE-MMD with modified cost $\dots+d_k^2(\cdot,\cdot)$, see Eq.~\eqref{eq:wae-cost}, to $\dots+\log{}d_k^2(\cdot,\cdot)$ results in much steeper and more irregular descent. The WAE-MMD cost is closer to zero, and may sometimes be even negative as noted in~\citet["\dots penalty used in WAE-MMD is not precisely the population MMD, but a sample based U-statistic\dots if the population MMD is zero, it necessarily needs to take negative values from time to time."]{tolstikhin2018waegithub}. Thus the $\log$ version is not suitable for the WAE-MMD version, which coincides with experiments.

\begin{figure}[htb]
\centering
\begin{tabular}{@{}c@{}c@{}}
\includegraphics[width=0.44\textwidth]{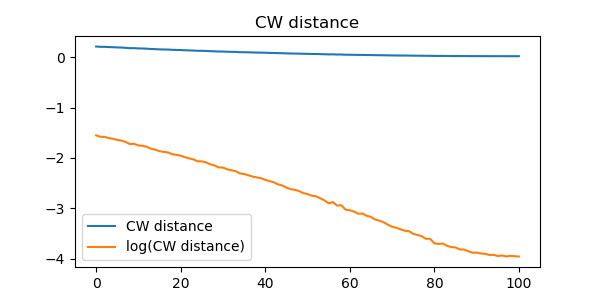}&
\includegraphics[width=0.44\textwidth]{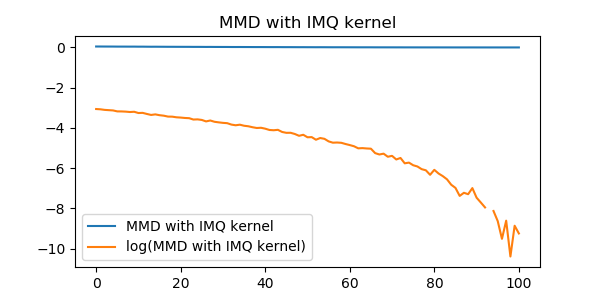}
\end{tabular}
\caption{Synthetic data in the latent and the distance from prior cost: left for the CWAE model, right for WAE-MMD. On the horizonal axis is the share of $z\sim{N}(0,1)$ in otherwise uniform data. The blue curves represent a standard model (without logarithm), while the orange one with a logarithm used.}
\label{fig:synthetic-latent}
\end{figure}

\paragraph{Thus we stress the following important differences}
\begin{itemize}
\item Due to the properties of Cramer-Wold kernel, we are able to substitute the sample estimation of $d^2_k(X,N(0,I))$ given in WAE-MMD by $d^2_{\CW}(X,Y)$ by its exact formula.
\item CWAE, as compared to WAE, it is less sensitive to the choice of parameters: 
\begin{enumerate} 
\item The choice of regularization hyper-parameter is given by the Silverman's rule of thumb, and depends on the sample size (contrary to WAE-MMD, where the hyper-parameters are chosen by hand, and in general does not depend on the sample size).
\item  In our preliminary experiments we have observed that frequently (like in the case of log-likelihood), taking the logarithm of the non-negative factors of the cost function, which we aim to minimize to zero, improves the learning. Motivated by this and an analysis (see above), the CWAE cost uses logarithm of Cramer-Wold distance to balance the MSE and divergence terms. It turned out that in most cases it is enough to set in Eq.~\eqref{eq:cwae-cost} the parameter $\lambda=1$. 
\end{enumerate}
\end{itemize}

Summarizing, CWAE model, contrary to WAE-MMD, is less sensitive to the choice of parameters. Moreover, since we do not have the noise in the learning process given by the random choice of the sample $Y$ from $N(0,I)$, the learning should be more stable. As a consequence,  CWAE generally learns faster then WAE-MMD, and has smaller standard deviation of the cost-function during the learning process. Detailed results of the experiments in the case of CELEB~A  are presented in Figure~\ref{fig:conv1}. 
Moreover, for better comparison, we verified how the learning process looks like in the case of original WAE-MMD architecture form \citep{tolstikhin2017wasserstein}, see Figure~\ref{fig:conv1}. 

\paragraph{Generalized Cramer-Wold kernel.} 
In this paragraph we show that  asymptotically (with respect to dimension $D$), Cauchy kernel used in WAE-MMD can in fact be seen as the sliced kernel (where as slices we use two-dimensional subspaces). To do so we need the probability measure on $d$-dimensional linear subspaces of $\R^D$, see~\citet{mattila1999geometry}. One can do it either directly with the definition of Grassmanian, or alternatively describe it with the orthonormal basis, for integration over orthonormal matrices~\citep{aubert2003invariant,braun2006invariant}. 

Now we define the $d$-dimensional sliced Cramer-Wold kernel by the formula
\begin{linenomath*}
\begin{equation*}
k_d(\mu,\nu)=\int_{G(d,D)} \il{\reg_\gamma(\mu_v),\reg_\gamma(\nu_v)}_{L_2} d\gamma_{d,D}(v),
\end{equation*}
\end{linenomath*}
where $\gamma_{d,D}$ denotes the respective Radon probability measure on $G(d,D)$. Equivalently we can integrate over orthonormal sequences in $\R^D$ of length $d$:
\begin{linenomath*}
\begin{equation*}
O_d(\R^D)=\{(v_1,\ldots,v_d) \in (\R^D)^d: \|v_i\|=1, v_i \perp v_j\}.
\end{equation*}
\end{linenomath*}
The normalized (invariant with respect to orthonormal transformations) measure on $O_d$ we denote with $\theta_d$. Observe that for $d=1$ we obtain normalized integration over the sphere. 

Then we obtain that $k_d$ can be equivalently defined as
\begin{linenomath*}
\begin{equation*}
k_d(\mu,\nu)=\int_{O_d} \il{\reg_\gamma(\mu_v),\reg_\gamma(\nu_v)}_{L_2} d\theta_{d}(v).
\end{equation*}
\end{linenomath*}
Let us first observe that for gaussian densities the formula for $k_d$ can be slightly simplified, namely
\begin{linenomath*}
\begin{equation*}
\begin{array}{r@{}l}
k_d(N(x,\alpha I),N(y,\beta I))&=\int_{O_d} N(v^T(x-y),(\alpha+\beta+2\gamma)I_d)(0) d\theta_d(v)\\
&=\int_{O_d} \prod \limits_{i=1}^d N(v_i^T(x-y),\alpha+\beta+2\gamma)(0) d\theta_d(v).
\end{array}
\end{equation*}
\end{linenomath*}
Now if we define
\begin{linenomath*}
\begin{equation*}
\Phi_D^d(s,h)
=\int_{O_d}N(v^T se_1,hI_d)(0) d\theta_d(v),
\end{equation*}
\end{linenomath*}
where $e_1 \in \R^D$ is the first unit base vector, we obtain that 
the kernel-product reduces to computation of the scalar function $\Phi_D$
\begin{linenomath*}
\begin{equation*}
k_d(N(x,\alpha I),N(y,\beta I))=
\Phi_D^d(\|x-y\|,\alpha+\beta+2\gamma).
\end{equation*}
\end{linenomath*}
The crucial observation needed to proceed further is that the measure space $(O_d(\R^D),\theta_d)$ can be approximated by $(\R^D,N(0,I/D))^d$. This follows from the fact, that if $v_1,\ldots,v_d$ are drawn from the density $N(0,I/D)$, then for sufficiently large $D$ we have $\|v_i\| \approx 1$ and $\il{v_i ,v_j} \approx 0$ for $i \neq j$.

\begin{theorem}
We have
\begin{linenomath*}
\begin{equation*}
\Phi^d_D(s,h) \to 
(2\pi)^{-d/2} \cdot (h+s^2/D)^{-d/2}.
\end{equation*}
\end{linenomath*}
\end{theorem}

\begin{proof}
By the observation stated before the theorem, we have
\begin{linenomath*}
\begin{equation*}
\begin{array}{r@{\,}l}
\Phi_D^d(s,h)
&=\int_{O_d} \prod \limits_{i=1}^d N(v_i^Tse_1,h)(0) d\theta_d(v)\approx \int \limits_{(\R^D)^d} \prod \limits_{i=1}^d N(v_i^Tse_1,h)(0) N(0,\frac{I}{D})(v_i) dv_1 \ldots dv_d\\
&= \prod \limits_{i=1}^d
\int \limits_{\R^D} N(v_i^Tse_1,h)(0) N(0,\frac{I}{D})(v_i) dv_i.
\end{array}
\end{equation*}
\end{linenomath*}
Thus it suffices to compute each component of the above formula. To do so, we denote by $N_k$ the $k$-dimensional normal density, and get

\begin{linenomath*}
\begin{equation*}
\begin{array}{r@{\,}l}
\int_{\R^D} N_1(s\il{v,e_1},h)(0) \cdot N_D&(0,\tfrac{1}{D}I)(v)dv=\\
&=\int_{-\infty}^{\infty} N_1(0,h)(st) \frac{N_D(0,\frac{1}{D}I)(te_1)}{N_{D-1}(0,\frac{1}{D}I)(0)} \int_{\R^{D-1}} N_{D-1}(0,\tfrac{1}{D}I)(w) dw \, dt\ \\
&=\int_{-\infty}^{\infty} N_1(0,h)(st)\frac{N_D(0,\frac{1}{D}I)(te_1)}{N_{D-1}(0,\frac{1}{D}I)} dt\\
&=\int_{-\infty}^{\infty} \frac{1}{\sqrt{2\pi h}}\exp(-\frac{1}{2h}(st)^2)
\cdot\frac{\sqrt{D}}{\sqrt{2\pi}}\exp(-\frac{1}{2}Dt^2)dt
\sqrt{2\pi}\sqrt{\frac{h}{s^2+hD}}\\
&=\frac{1}{\sqrt{2\pi}} \frac{1}{\sqrt{h+s^2/D}},
\end{array}
\end{equation*}
\end{linenomath*}

which yields the assertion of the theorem.
\end{proof}

As a direct consequence, we obtain the following asymptotic formula (with dimension $D$ large) of the generalized Cramer-Wold kernel of two spherical Gaussians
\begin{linenomath*}
\begin{equation*}
\begin{array}{r@{}l}
k_d(N(x,&\alpha I),N(y,\beta I)) \approx (2\pi)^{-d/2} \cdot (\alpha+\beta+2\gamma+\|x-y\|^2/D)^{-d/2}.
\end{array}
\end{equation*}
\end{linenomath*}
Observe, that with $d=2$ we obtain the standard inverse multiquadratic kernel.



\section{Experiments} \label{se:ex}

In this section we empirically validate the proposed CWAE\footnote{The code is available \url{https://github.com/gmum/cwae}.} model on standard benchmarks for generative models: CELEB~A, CIFAR-10 and MNIST, FashionMNIST. We compare proposed CWAE model with WAE-MMD~\citep{tolstikhin2017wasserstein} and SWAE~\citep{kolouri2018sliced}. As we shall see, our results match or even exceed those of WAE-MMD and SWAE, while using a closed-form cost function (see the previous section for a more detailed discussion). The rest of this section is structured as follows. In Section~\ref{se:ex:qual} we report results on standard qualitative tests, as well as visual investigations of the latent space. In Section~\ref{se:ex:quant} we will turn our attention to quantitative tests using Fr\'{e}chet Inception Distance and other metrics~\citep{heusel2017gans}.


\subsection{Experimentation setup}

In the experiment we use two basic architecture types. Experiments on MNIST and Fashion-MNIST use a feed-forward network for both encoder and decoder, and a 8 neuron latent layer, all using ReLU activations. For CIFAR-10, and CELEB~A data sets we use convolution-deconvolution architectures. Please refer to Section ~\ref{app:architectures} for full details.

\begin{table*}[htb]
\normalsize
\caption{Comparison of different architectures on the MNIST, Fashion-MNIST, CIFAR-10 and CELEB~A datasets. 
All models outputs except AE are similarly close to the normal distribution. CWAE achieves the best value of FID score (lower is better). 
}
\begin{center}
{\small
\begin{tabular}[width=\textwidth]{lllrrrrr}
\toprule
Data set & Method & Learning & $\lambda$ & Skewness & Kurtosis & Rec. & FID \\
 &  & rate & & & (normalized) & error & score\\
\midrule             
MNIST   & AE & 0.001 & - &  1197.24 & 878.07 & 11.19 & 52.74 \\
        & VAE & 0.001 & - &  0.43 & 0.77 & 18.79 & 40.47 \\
        & SWAE & 0.001 & 1.0 &  6.01 & 10.72 & 10.99 & 29.76 \\
        & WAE-MMD & 0.0005 & 1.0 &  11.70 & 8.34 & 11.14 & 27.65 \\
        & CWAE & 0.001 & 1.0 &  12.21 & 35.88 & 11.25 &  \bf 23.63 \\
\midrule             
FASHION & AE & 0.001 & - &  140.21 & 85.58 & 9.87 & 81.98 \\
MNIST   & VAE & 0.001 & - &  0.20 & 4.86 & 15.41 & 64.98 \\
        & SWAE & 0.001 & 1.0 &  8.22 & 11.58 & 9.91 & 74.32 \\
        & WAE-MMD  & 0.001 & 1.0 &  22.17 & 37.96 & 9.90 & 69.16 \\
        & CWAE & 0.001 & 1.0 &  30.38 & 98.58 & 9.84 & \bf  57.06 \\
\midrule             
CIFAR10 & AE & 0.001 & - &  $2.5\times10^5$ & $1.7\times10^4$ & 24.67 & 269.09 \\
        & VAE & 0.001 & - &  35.81 & 3.67 & 63.77 & 172.39 \\
        & SWAE & 0.001 & 1.0 &  517.32 & 121.17 & 25.42 & 141.91 \\
        & WAE-MMD  & 0.001 & 1.0 &  1105.73 & 2097.14 & 25.04 & 129.37 \\
        & CWAE & 0.001 & 1.0 &  176.60 & 1796.66 & 25.93 & \bf  120.02 \\
\midrule             
CELEB~A & AE & 0.001 & - &  $4.6\times10^9$ & $2.6\times10^8$ & 86.41 & 353.50 \\
        & VAE & 0.001 & - &  43.72 & 171.66 & 110.87 & 60.85 \\
        & SWAE & 0.0001 & 100.0 &  141.17 & 222.02 & 85.97 & 53.85 \\
        & WAE-MMD & 0.0005 & 100.0 &  162.67 & 604.09 & 86.38 & 51.51 \\
        & CWAE & 0.0005 & 5.0 &  130.08 & 542.42 & 86.89 & \bf 49.69 \\
\bottomrule
\end{tabular}
}
\end{center}
\label{tab:comp}
\end{table*}

\begin{figure*}[htb]
\centering
\includegraphics[width=0.8\textwidth]{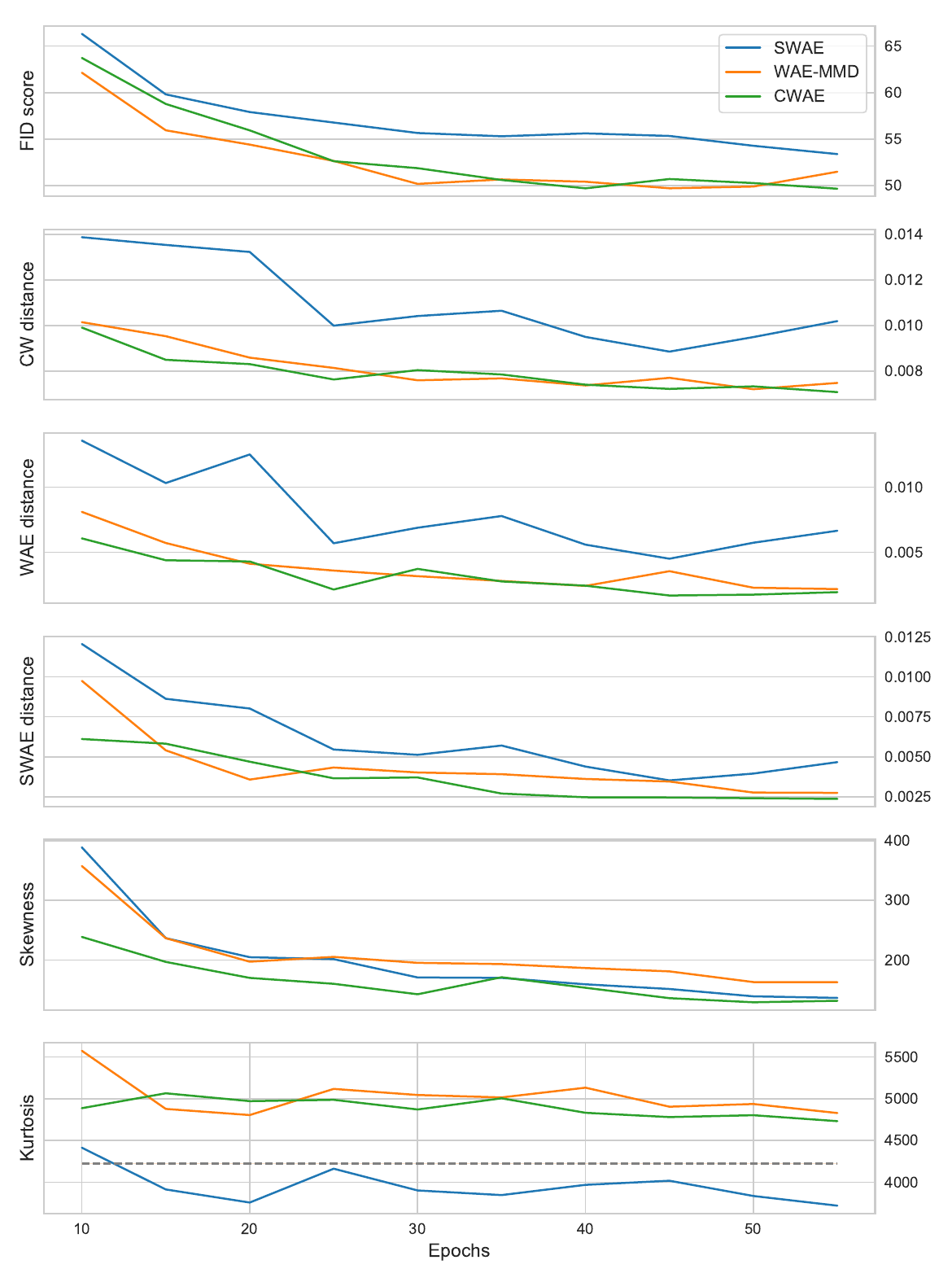}
\caption{CELEB~A trained CWAE, WAE, and SWAE models with FID score, kurtosis and skewness, as well as CW-, WAE-, and SWAE-distances on original WAE-MMD architecture from~\citet{tolstikhin2017wasserstein}. All values are mean from 5 models trained for each architecture. Confidence intervals represent the standard deviation. Optimum kurtosis is marked with a dash line.}
\label{fig:conv1}
\end{figure*}

\subsection{Qualitative tests}
\label{se:ex:qual}
The quality of a generative model is typically evaluated by examining generated samples or by interpolating between samples in the hidden space. We present such a~comparison between CWAE with WAE-MMD in Figure~\ref{fig:celeb}. We follow the same procedure as in~\citet{tolstikhin2017wasserstein}. In particular, we use the same base neural architecture for both CWAE and WAE-MMD. We consider for each model (i) interpolation between two random examples from the test set (leftmost in Figure~\ref{fig:celeb}), (ii) reconstruction of a~random example from the test set (middle column in Figure~\ref{fig:celeb}), and finally a~sample reconstructed from a~random point sampled from the prior distribution (right column in Figure~\ref{fig:celeb}). The experiment shows that there are no perceptual differences between CWAE and WAE-MMD generative distribution.

In the next experiment we qualitatively assess normality of the latent space. This will allow us to ensure that CWAE does not compromise on the normality of its latent distribution, which is a part of the cost function for all the models except AE. We compare CWAE\footnote{Since~\eref{eq:pi} is valid for dimensions $D\geq 20$, to implement CWAE in 2-dimensional latent space we apply equality $\1F1(1/2,1,-s)= e^{-\frac{s}{2}} I_0\left(\frac{s}{2}\right)$ jointly with the approximate formula~\citep[p.~378]{abramowitz1964handbook} for the Bessel function of the first kind $I_0$
.} with VAE, WAE and SWAE on the MNIST data with using 2-dimensional latent space and a~two dimensional Gaussian prior distribution. Results are reported in Figure~\ref{fig:latent2D}. As is readily visible, the latent distribution of CWAE is as close, or perhaps even closer, to the normal distribution than that of the other models. 

To summarize, both in terms of perceptual quality and satisfying normality objective, CWAE matches WAE-MMD. The next section will provide more quantitative studies.

\begin{figure*}[htb]
\normalsize
\begin{center}
\begin{tabular}{@{}c@{}c@{}c@{}c@{}c@{}}
\includegraphics[width=0.20\textwidth]{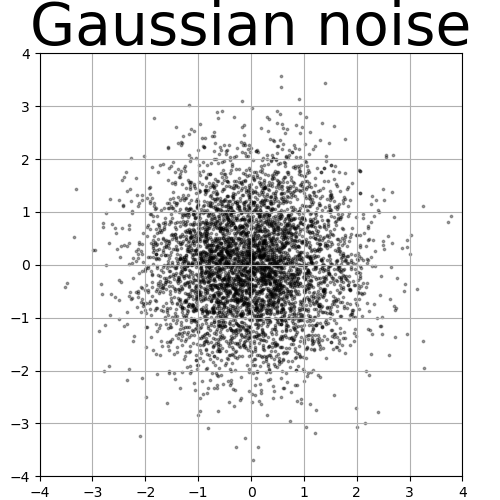}&
\includegraphics[width=0.20\textwidth]{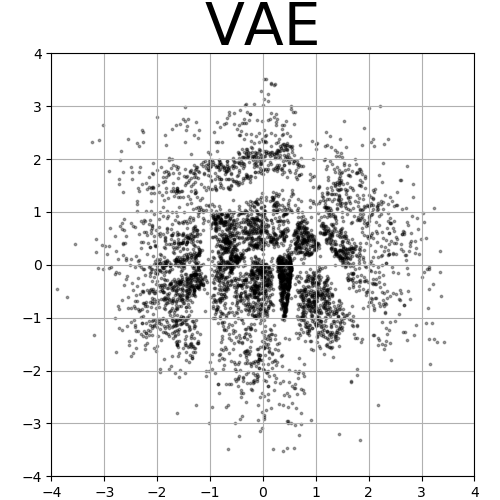}&
\includegraphics[width=0.20\textwidth]{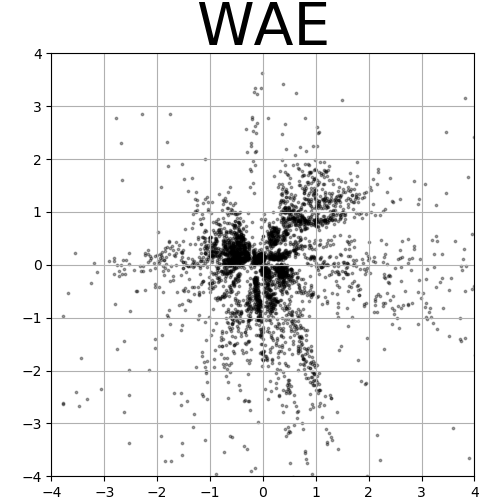}&
\includegraphics[width=0.20\textwidth]{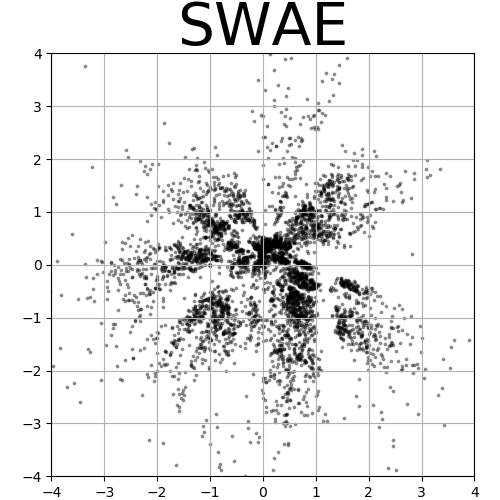}&
\includegraphics[width=0.20\textwidth]{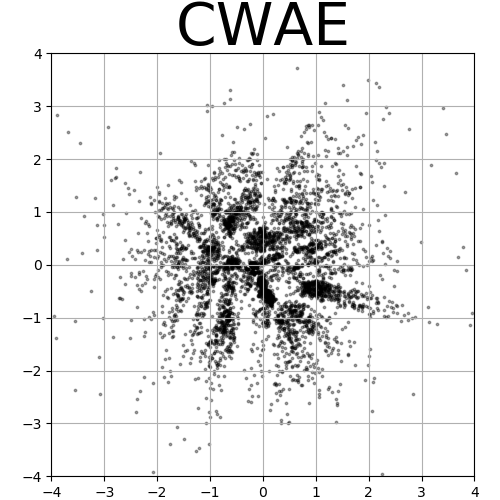}
\end{tabular}
\end{center}
\caption{Latent distribution of CWAE is close to the normal distribution. Each subfigure presents points sampled from two-dimensional latent spaces, VAE, WAE, SWAE, and CWAE (left to right). All trained on the MNIST data set.}
\label{fig:latent2D} 
\end{figure*}

\begin{table}[htb]
\normalsize
\caption{Comparison between classical cost function of WAE, SWAE and version with logarithm (method names with a -LOG suffix). }
\begin{center}
{\small
\begin{tabular}[width=\textwidth]{lllrrrrr}
\toprule
Data set & Method & Learning & $\lambda$ & Skewness & Kurtosis & Rec. & FID \\
 &  & rate & & & (normalized) & error & score\\
\midrule             
MNIST   & SWAE & 0.001 & 1.0 &  6.01 & 10.72 & 10.99 & 29.76 \\
        & SWAE-LOG & 0.0005 & 1.0 &  2.36 & 12.20 & 11.42 & 24.89 \\
        & WAE & 0.0005 & 1.0 &  11.70 & 8.34 & 11.14 & 27.65 \\
        & WAE-LOG & 0.001 & 1.0 &  18.22 & 61.04 & 13.17 & 36.08 \\        
        & CWAE & 0.001 & 1.0 &  12.21 & 35.88 & 11.25 &  {\bf 23.63} \\        
\midrule             
FASHION & SWAE & 0.001 & 1.0 &  8.22 & 11.58 & 9.91 & 74.32 \\
MNIST   & SWAE-LOG & 0.0005 & 1.0 &  1.36 & 13.64 & 9.83 & 61.72 \\
        & WAE & 0.001 & 1.0 &  22.17 & 37.96 & 9.90 & 69.16 \\
        & WAE-LOG & 0.005 & 1.0 &  53.37 & 66.01 & 16.14 & 99.51 \\
        & CWAE & 0.001 & 1.0 &  30.38 & 98.58 & 9.84 & {\bf  57.06} \\        
\midrule             
CIFAR10 & SWAE & 0.001 & 1.0 &  517.32 & 121.17 & 25.42 & 141.91 \\
         & SWAE-LOG & 0.0005 & 1.0 &  157.14 & 234.52 & 26.25 &  {\bf 119.89} \\
        & WAE & 0.001 & 1.0 &  1105.73 & 2097.14 & 25.04 & 129.37 \\
        & WAE-LOG & 0.001 & 1.0 &  $1.1\times10^8$ & $4.9\times10^5$ & 28.25 & 136.25 \\
        & CWAE & 0.001 & 1.0 &  176.60 & 1796.66 & 25.93 &  120.02 \\     
\midrule             
CELEB~A  & SWAE & 0.0001 & 100.0 &  141.17 & 222.02 & 85.97 & 53.85 \\
        & SWAE-LOG & 0.0005 & 10.0 &  132.54 & 465.39 & 85.82 & 53.46 \\
        & WAE & 0.0005 & 100.0 &  162.67 & 604.09 & 86.38 & 51.51 \\
        & WAE-LOG & 0.0001 & 1.0 &  514.43 & 2154.39 & 82.53 & 58.10 \\        
        & CWAE & 0.0005 & 5.0 &  130.08 & 542.42 & 86.89 & {\bf 49.69} \\        
\bottomrule
\end{tabular}
}
\end{center}
\label{tab:comp_log}
\end{table}

\subsection{Quantitative tests}
\label{se:ex:quant}

In order to quantitatively compare CWAE with other models, in the first experiment we follow the experimental setting and use the same architecture as in~\citet{tolstikhin2017wasserstein}. In particular, we use the Fr\'{e}chet Inception Distance (FID)~\citep{heusel2017gans}. 

In agreement with the qualitative studies, we observe FID  
of CWAE to be similar or slightly better than WAE-MMD. We highlight that CWAE on CELEB~A achieves $49.69$ FID score compared to $51.51$ and $53.85$ achieved by WAE-MMD and SWAE, respectively, see Figure~\ref{fig:conv1} and Table~\ref{tab:comp}.


Next, motivated by Remark~\ref{re:wazny} we propose a novel method for quantitative assessment of the models based on their comparison to standard normal distribution in the latent. To achieve this we have decided to use one of the most popular statistical normality tests, i.e. Mardia tests~\citep{henze2002invariant}. Mardia's normality tests are based on verifying whether the skewness $b_{1,D}(\cdot)$ and kurtosis $b_{2,D}(\cdot)$ of a sample $X=(x_i)_{i=1..n} \subset \R^D$:
\begin{linenomath*}
\begin{equation*}
b_{1,D}(X)=\tfrac{1}{n^2}
\sum_{j,k}(x_j^Tx_k)^3, \text{ and }
b_{2,D}(X)=\tfrac{1}{n}
\sum_{j}\|x_j\|^4
\end{equation*}
\end{linenomath*}
are close to that of standard normal density. The expected Mardia’s skewness and kurtosis for standard multivariate normal distribution is $0$ and $D(D + 2)$, respectively. To enable easier comparison in experiments we consider also the value of the normalized Mardia's kurtosis given by 
$b_{2,D}(X)-D(D+2)$,
which equals zero for the standard normal density. 

Results are presented in Figure~\ref{fig:conv1} and Table~\ref{tab:comp}. In Figure~\ref{fig:conv1} we report for CELEB~A data set the value of FID score, Mardia's skewness and kurtosis during learning process of WAE, SWAE and CWAE (measured on the validation data set). 

WAE, SWAE and CWAE models obtain the best reconstruction error, comparable to AE. VAE model exhibits a slightly worse reconstruction error, but values of kurtosis and skewness indicating their output is closer to normal distribution. As expected, the output of AE is far from  normal distribution; its kurtosis and skewness grow during learning. This arguably less standard evaluation, which we hope will find adoption in the community, serves as yet another evidence that \emph{CWAE has strong generative capabilities which at least match performance of WAE-MMD.} Moreover we observe that VAE model's output distribution is closest to the normal distribution, at the expense of the reconstruction error, which is reflected by the blurred reconstructions typically associated with VAE model.

At the end of these subsection we compare our method with classical approaches WAE-MMD and SWAE with modified cost function.
More precisely, similarly to CWAE we use logarithm in cost function in WAE and SWAE, see Table~\ref{tab:comp_log}. 

At it was mentioned, adding logarithm to WAE-MMD dos not work, since penalty used in WAE-MMD is not precisely the population MMD, but a sample based U-statistic. In consequence cost function can be negative from time to time. Thus the $\log$ version is not suitable for the WAE-MMD version. On the other hand, logarithm improves learning process in the case of CWAE and SWAE.

\subsection{Comparison of learning times}\label{ap:B}

We expect our closed-form formula to lead to a speedup in training time. Indeed, we found that for batch-sizes up to $1024$ CWAE is faster (in terms of time spent per batch) than other models. More precisely, CWAE is approximately $2\times$ faster up to $256$ batch-size.

Figure~\ref{fig:proc_comparison} gives comparison of mean learning time for different most frequently used batch-sizes. Time spent on processing a batch is actually smaller for CWAE for a practical range of batch-sizes $[32,512]$. For batch-sizes larger than $1024$, CWAE is slower due to its quadratic complexity with respect to the batch-size. However, we note that batch-sizes larger even than $512$ are relatively rarely used in practice for training autoencoders.

\begin{figure}[htb]
\centering
\includegraphics[width=0.85\linewidth]{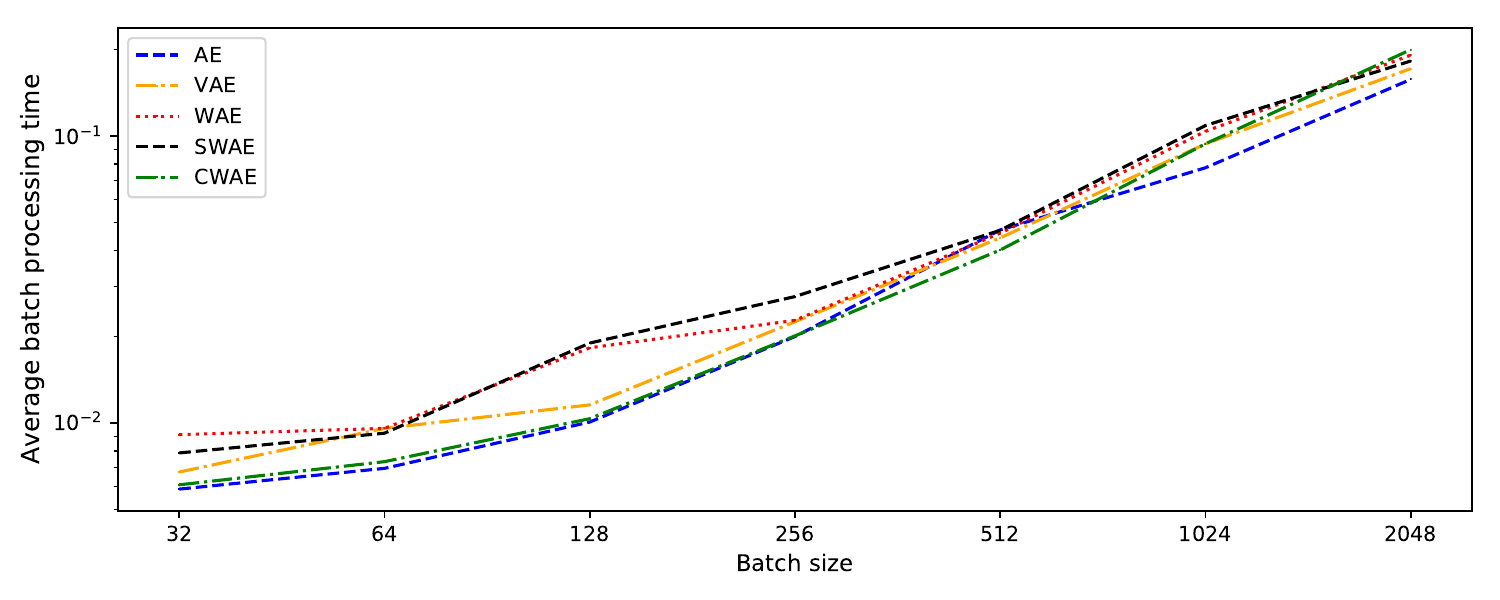}
\caption{Comparison of mean batch learning time (times are in log-scale) for different algorithms in seconds, all for the same architecture like the one in~\citet{tolstikhin2017wasserstein} and all requiring similar number of epochs to train the full model. The times may differ for computer architectures with more/less memory on a GPU card.
}
\label{fig:proc_comparison}
\end{figure}



\subsection{Architecture}\label{app:architectures}

At the end of this section we describe details of architecture used in our experiments.

MNIST/Fashion-MNIST ($28\times28$ images): an encoder-decoder feed-forward architecture:
\begin{itemize}[itemindent=-1.5em]
\item[] \textbf{encoder} three feed-forward ReLU layers, 200 neurons each
\item[]\textbf{latent} $8$-dimensional,
\item[]\textbf{decoder} three feed-forward ReLU layers, 200 neurons each.
\end{itemize}

CIFAR-10 dataset ($32\times$ images with three color layers): a convolution-deconvolution network
\begin{itemize}[leftmargin=*]
\item[]\textbf{encoder}
\begin{itemize}[leftmargin=*]
\item[] four convolution layers with $2\times2$ filters, the second one with $2\times2$ strides, other non-strided (3, 32, 32, and 32 channels) with ReLU activation,
\item[] 128 ReLU neurons dense layer,
\end{itemize}
\item[]\textbf{latent} $64$-dimensional,
\item[]\textbf{decoder}
\begin{itemize}[leftmargin=*]
\item[] two dense $ReLU$ layers with $128$ and $8192$ neurons,
\item[] two transposed-convolution layers with $2\times2$ filters (32 and 32 channels) and ReLU activation,
\item[] a transposed convolution layer with $3\times3$ filter and $2\times2$ strides (32 channels) and ReLU activation,
\item[] a transposed convolution layer with $2\times2$ filter (3 channels) and sigmoid activation.
\end{itemize}
\end{itemize}

CELEB~A (with images centered and cropped to $64\times64$ with 3 color layers): in order to have a direct comparison to WAE-MMD model on CELEB~A, an identical architecture was used as that in~\citet{tolstikhin2017wasserstein} utilized for the WAE-MMD model (WAE-GAN architecture is, naturally, different):
\begin{itemize}[leftmargin=*]
\item[]\textbf{encoder}
\begin{itemize}[leftmargin=*]
\item[] four convolution layers with $5\times5$ filters, each layer followed by a batch normalization (consecutively 128, 256, 512, and 1024 channels) and ReLU activation,
\end{itemize}
\item[]\textbf{latent} 64-dimensional,
\item[]\textbf{decoder} 
\begin{itemize}[leftmargin=*]
\item[] dense 1024 neuron layer,
\item[] three transposed-convolution layers with $5\times5$ filters, and each layer followed by a batch normalization with ReLU activation (consecutively 512, 256, and 128 channels),
\item[] transposed-convolution layer with $5\times5$ filter and 3 channels, clipped output value.
\end{itemize}
\end{itemize}

The last layer returns the reconstructed image. The results for all above architectures are given in Table~\ref{tab:comp}. All networks were trained with the Adam optimizer~\citep{kingma2014adam}. The hyper-parameters used were $learning\; rate=0.001$, $\beta_1=0.9$, $\beta_2=0.999$, $\epsilon=1e-8$. MNIST and CIFAR 10 models were trained for 500 epochs, CELEB~A for 55.

Similarly to~\citet{tolstikhin2017wasserstein}, models were trained using Adam with for 55 epochs, with the same optimizer parameters.

\section{Conclusions}\label{se:con}



In the paper we have presented a new autoencoder based generative model CWAE, which matches and in some cases improves results of WAE-MMD, while using \emph{a cost function given by a simple closed analytic formula.} We hope this result will encourage future work in developing simpler to optimize analogs of strong neural models. 

Crucial in the construction of CWAE is the use of the developed Cramer-Wold metric between samples and distributions, which can be effectively computed for Gaussian mixtures. As a consequence, we obtain a reliable measure of the divergence from normality. Future work could explore use of the Cramer-Wold distance in other settings, in particular in adversarial models.

\section{Acknowledgements}

The work of P. Spurek was supported by the National Centre of Science (Poland) Grant No. 2015/19/D/ST6/01472. The work of J. Tabor was supported by the National Centre of Science (Poland) Grant No. 2015/19/B/ST6/01819. The work of I. Podolak was supported by the National Centre of Science (Poland) Grant No. 2017/25/B/ST6/01271.


\vskip 0.2in
\bibliography{ref}

\end{document}